\documentclass[lettersize,journal]{IEEEtran}
\usepackage{amsmath,amsfonts}
\usepackage{array}
\usepackage[caption=false,font=normalsize,labelfont=sf,textfont=sf]{subfig}

\usepackage{algorithm}
\usepackage{algorithmicx}
\usepackage{algpseudocode}
\usepackage{amsmath}
\DeclareMathOperator*{\argmax}{arg\,max}

\usepackage{textcomp}
\usepackage{stfloats}
\usepackage{url}
\usepackage{verbatim}
\usepackage{graphicx}
\usepackage{hyperref}
\usepackage{algorithm}
\usepackage{cuted}   
\newcommand\numeq[1]%
{\stackrel{\scriptscriptstyle(\mkern-1.5mu#1\mkern-1.5mu)}{=}}
\newcommand\numeqs[1]%
{\stackrel{\scriptscriptstyle(\mkern-1.5mu#1\mkern-1.5mu)}{\leq}}
\usepackage{multirow}
\usepackage{array}
\usepackage{booktabs}
\usepackage{amsmath}  
\usepackage{amsthm}   

\usepackage{titlesec}
\usepackage{cite}


\usepackage[dvipsnames]{xcolor}
\hyphenation{}
\def\BibTeX{{\rm B\kern-.05em{\sc i\kern-.025em b}\kern-.08em
    T\kern-.1667em\lower.7ex\hbox{E}\kern-.125emX}}
\usepackage{balance}
\newtheorem{theorem}{Theorem}
\newtheorem{remark}{Remark}
\newtheorem{note}{Note}
\newtheorem{lemma}{Lemma}


\newcommand{\E}{\mathbb{E}}

\usepackage{tikz}

\usetikzlibrary{shadows}

\begin{document}
\title{Double Successive Over-Relaxation Q-Learning with an Extension to Deep Reinforcement Learning}
\author{Shreyas S. R.$^{\dagger}$
	\thanks{$^{\dagger}$Research Scholar, Department of Mathematics, Indian Institute of Technology (IIT) Indore, Madhya Pradesh 452020, India. Email: \texttt{shreyassr123@gmail.com}}}

\maketitle

\begin{abstract}
Q-learning is a widely used algorithm in reinforcement learning (RL), but its convergence can be slow, especially when the discount factor is close to one. Successive Over-Relaxation (SOR) Q-learning, which introduces a relaxation factor to speed up convergence, addresses this issue but has two major limitations: In the tabular setting, the relaxation parameter depends on transition probability, making it not entirely model-free, and it suffers from overestimation bias. To overcome these limitations, we propose a sample-based, model-free double SOR Q-learning algorithm. Theoretically and empirically, this algorithm is shown to be less biased than SOR Q-learning. Further, in the tabular setting, the convergence analysis under boundedness assumptions on iterates is discussed. The proposed algorithm is extended to large-scale problems using deep RL. Finally, both the tabular version of the proposed algorithm and its deep RL extension is tested on benchmark examples. 
\end{abstract}

\begin{IEEEkeywords}
Deep reinforcement learning, Markov decision processes, Over-estimation bias, Successive over-relaxation
\end{IEEEkeywords}

\section{Introduction}

Reinforcement learning (RL) is a commonly employed approach for decision-making in a variety of domains, wherein agents learn from their interactions with the environment. The primary goal of RL is to find an optimal policy, which involves solving Bellman's optimality equation. Depending on the amount of information available, various algorithms are proposed in both dynamic programming and RL to solve Bellman's optimality equation. When the complete information of the Markov Decision Process (MDP) is available (model-based), techniques from dynamic programming, such as value iteration and policy iteration, are utilized to find the optimal value function, thereby, an optimal policy \cite{MR1270015}. In cases where the MDP is not fully known (model-free), RL techniques are utilized \cite{MR3889951}. Q-learning (QL), introduced by Watkins in 1992, is a fundamental algorithm in RL used to find an optimal policy in model-free environments \cite{watkins1992q}. Despite its wide application in several areas \cite{xie,liu, wu,9720124}, Q-learning suffers from slow convergence \cite{azar2011speedy, kamanchi2019successive} and over-estimation bias \cite{thrun2014issues, hasselt2010double, 8695133, wu2020reducing}.

In 1973, D. Reetz proposed a successive over-relaxation (SOR) technique to accelerate the convergence of value iteration in dynamic programming \cite{MR398532}. This technique involves constructing a modified Bellman optimality equation using an SOR parameter. The SOR parameter in \cite{MR398532} depends on the transition probability of the self-loop of an MDP. Furthermore, it was demonstrated in \cite{MR398532} that, with a proper choice of the SOR parameter, the convergence of value iteration can be significantly improved.

Recently, this SOR technique has been incorporated into Q-learning, resulting in a new algorithm called SOR Q-learning (SORQL) \cite{kamanchi2019successive}. In essence, SOR Q-learning is a variant of the standard Q-learning algorithm designed to accelerate the convergence of Q-learning. The motivation for the SORQL algorithm stems from the fact that the underlying deterministic mapping of SORQL can be shown to have a contraction factor less than that of the standard Q-Bellman operator \cite{kamanchi2019successive}, thereby speeding up the convergence of the Q-learning algorithm. However, the SOR parameter in SORQL depends on the transition probability of a self-loop in the MDP, making the algorithm not entirely model-free. Furthermore, we have identified both theoretically and empirically that SOR Q-learning suffers from overestimation bias. To address these issues, in this manuscript, we propose model-free double SOR Q-learning (MF-DSORQL), which combines the idea of double Q-learning \cite{hasselt2010double} with the SOR technique. The advantages of the proposed algorithm are that it is entirely model-free and has been shown, both theoretically and empirically, to have less bias than SORQL. In addition, this work provides the convergence analysis of the proposed MF-DSORQL algorithm using stochastic approximation (SA) techniques.

 Motivated from the work in \cite{mnih2013playing}, and \cite{9206598}, wherein the double Q-learning and SOR Q-learning in the tabular version extended to large-scale problems using the function approximation. In this manuscript, we extend the tabular version of the proposed algorithm and present a double successive over-relaxation deep Q-network (DSORDQN). The proposed algorithm is tested on roulette \cite{hasselt2010double}, multi-armed bandit problem \cite{8695133}\cite{tan2024q}, OpenAI gym's CartPole \cite{brockman2016openai}, LunarLander \cite{brockman2016openai} environments, Atari games and a maximization example where the state space is in the order of $10^9$ \cite{weng2020mean}. The experiments corroborate the theoretical findings.
 \vspace*{-0.45cm}
 \subsection{Related Works and Brief Literature Review}
 Several variants of Q-learning algorithms are available in the literature. To mention a few, in \cite{azar2011speedy}, speedy Q-learning (SQL) algorithm is developed to fasten the convergence of the Q-learning algorithm. At every iteration, SQL uses two successive Q-estimates in its update rule. Recently, the concept of SOR has been applied to the SQL algorithm, leading to the development of a new algorithm called generalized speedy Q-learning \cite{john2020generalized}. The DQL proposed in \cite{hasselt2010double} to address the over-estimation of Q-learning was later improved in \cite{zhang2017weighted} using a weighted double Q-learning approach. Specifically, \cite{zhang2017weighted} uses a weighted combination of estimates of DQL \cite{zhang2017weighted}. In \cite{8695133},  bias-corrected Q-learning algorithm is proposed wherein a bias correction term is constructed and subtracted from the Q-learning iteration. Also, self-corrected Q-learning is proposed in \cite{zhu2021self}, where a self-correcting estimator is defined to control the bias. Recently, a generalization of the Q-learning algorithm, known as maxmin Q-learning, has been proposed to address the maximisation bias, which uses $N$ estimates with $N = 1$ corresponding to the Q-learning \cite{Lan}. Unlike most of the algorithms mentioned above, the proposed method solves a modified Bellman’s equation. In addition, contrast to the existing works, which address the over-estimation issue of Q-learning or slower convergence of Q-learning separately, the proposed method aims to tackle both issues simultaneously. 

Lately, in \cite{diddigi2022generalized}, the SORQL algorithm has been suitably modified to handle two-player zero-sum games, and consequently, a generalized minimax Q-learning algorithm was proposed. The convergence of several RL algorithms depends on the theory developed in SA \cite{tsitsiklis1994asynchronous,6796861, MR2442439,singh2000convergence, diddigi2022generalized}. Many of the iterative schemes in the literature assume the boundedness and proceed to show the convergence. For instance, in \cite{diddigi2022generalized}, the convergence of the generalized minimax Q-learning algorithm was shown to converge under the boundedness assumption. In this manuscript, we take the same approach to show the convergence of the proposed algorithms in the tabular setting.

The key contributions are summarized as follows
\begin{itemize}
	\item A model-based SOR Q-learning algorithm in \cite{kamanchi2019successive} is modified into a model-free version (MF-SORQL) to make it more practically feasible.
	\item The issue of overestimation in the model-based SOR Q-learning algorithm is identified and demonstrated both theoretically and empirically. Based on these findings, a new model-free algorithm, known as MF-DSORQL, is proposed.
	\item The convergence of both MF-SORQL and MF-DSORQL algorithms is established using techniques from stochastic approximation theory.
	\item The proposed  MF-DSORQL algorithm is extended to deep RL setting and evaluated, along with its tabular version, on benchmark examples.
\end{itemize}

The paper is organized as follows: Section II covers preliminaries and convergence prerequisites. Section III presents the proposed algorithms. Section IV provides convergence analysis. Section V demonstrates numerical experiments, and Section VI concludes the paper.
\vspace{-0.2cm}
\section{Preliminaries}

MDP provides a mathematical representation of the underlying control problem and is defined as follows: Let $S$ be the set of finite states and $A$ be the finite set of actions. The transition probability for reaching the state $j$ when the agent takes action $a$ in state $i$ is denoted by $p^j_{ia}$. Similarly, let $r^j_{ia}$ be a real number denoting the reward the agent receives when an action $a$ is chosen at state $i$, and the agent transitions to state $j$. Finally, the discount factor is $\gamma$, where $0\leq \gamma <1$. A policy $\pi$ is a mapping from $S$ to $\Delta(A)$, where $\Delta(A)$ denotes the set of all probability distributions over $A$. The solution of the Markov decision problem is an optimal policy $\pi^*$. The problem of finding an optimal policy $\pi^*$ reduces to finding the fixed point of the following operator $U: \mathbb{R}^{|S\times A|} \rightarrow \mathbb{R}^{|S\times A|}$ defined as
\begin{equation}
UQ(i,a) = r(i,a)+\gamma\sum_{j=1}^{|S|}p^j_{ia} \mathcal{M}Q(j)
\end{equation}
where
\begin{equation}
r(i,a) = \sum_{j=1}^{|S|}p^j_{ia}r^j_{ia} \quad \text{and} \quad \mathcal{M}Q(j) = \max_{b \in A} Q(j,b).
\end{equation}

To simplify the equations, we defined the max-operator, $\max_{a \in A} Q(i, a)$ as $\mathcal{M}Q(i)$ for all $i \in S$. Also, $R_{\max} = \max_{(s,a,s')}|r^{s'}_{sa}|, \forall (s,a,s') \in S \times A \times S$. These notations will be used throughout the manuscript as necessary. Note that $U$ is a contraction operator with $\gamma$ as the contraction factor, and $Q^*$ serves as the optimal action-value function of the operator $U$. Moreover, one can obtain an optimal policy $\pi^*$ from the $Q^*$, using the  relation $\pi^*(i) \in \text{arg}\, \mathcal{M} Q^*(i), \;\forall i \in S.$ 

Q-learning algorithm is a SA version of Bellman's operator $U$. Given a sample $\{i,a,j,r^j_{ia}\}$, current estimate $Q_n(i,a)$, and a suitable step-size $\beta_n(i,a)$, the update rule of Q-learning is as follows
\begin{equation}
Q_{n+1}(i,a) = Q_n(i,a)+\beta_n(i,a)   (r^j_{ia}+\gamma \, \mathcal{M} Q_n(j)-Q_n(i,a)).
\end{equation}

Under suitable assumptions, the above iterative scheme is shown to converge to the fixed point of $U$ with probability one (w.p.1) \cite{watkins1992q,tsitsiklis1994asynchronous,6796861}. Recently, in \cite{kamanchi2019successive}, a new Q-learning algorithm known as successive over-relaxation Q-learning (SORQL) is proposed to fasten the convergence of Q-learning. More specifically, a modified Bellman's operator is obtained using the successive over-relaxation technique. In other words, instead of finding the fixed point of $U$, the fixed point of the modified Bellman's operator $U_w:\mathbb{R}^{|S\times A|}\rightarrow \mathbb{R}^{|S\times A|}$ is evaluated and is defined as  
\begin{align}
&(U_wQ)(i,a) \nonumber\\&=w\bigg(r(i,a)+\gamma \sum_{j=1}^{|S|}p^j_{ia}\mathcal{M}Q(j)\bigg)+(1-w)\mathcal{M}Q(i)
\end{align}
where $0<w\leq w^*$, and $w^*=\min\limits_{i,a}\left\{\frac{1}{1-\gamma p^i_{ia}}\right\}.$
The contraction factor of the map $U_w$ is $1-w+w\gamma$. It is interesting to note that for a suitable SOR parameter $w$, one can show that $1-w+w\gamma < \gamma$ (Lemma 4 in \cite{kamanchi2019successive}). Consequently, the fixed-point iterative scheme corresponding to $U_w$ converges faster compared to the operator $U$. The SORQL algorithm is built using the operator $U_w$ and can be viewed as the stochastic approximation version of the fixed-point iterative scheme $Q_{n+1}=U_w(Q_n)$. Therefore, one can expect that the SORQL algorithm will converge faster than that of the Q-learning. Although the fixed points of $U$ and $U_w$ may not be the same, the following relation between their optimal Q-values ensures that they yield the same optimal policies
\begin{equation}
\mathcal{M} Q^*_w(i)=\mathcal{M} Q^*(i), \quad \forall i \in S
\end{equation}
where $Q^*$ and $Q^*_w$ are the fixed points of $U$ and $U_w$, respectively. Under suitable assumptions, it was proved that the SORQL algorithm given by the update rule 
\begin{align}
&Q_{n+1}(i,a) \nonumber\\
&= (1-\beta_n(i,a) Q_n(i,a) + \beta_n(i,a) \Big[ w \Big( r^j_{ia} + \gamma \, \mathcal{M} Q_n(j) \Big) \nonumber\\
&\hspace*{4.8cm} + (1-w) \mathcal{M} Q_n(i) \Big]
\end{align}
where $0 < w \leq w^*$, and $0\leq \beta_n(i,a)\leq 1$ converges to the fixed point of $U_w$ w.p.1 \cite{kamanchi2019successive}. 

Note that the successive relaxation factor in the above equation depends on the transition probability, which makes it not entirely model-free. Owing to this and the natural concerns regarding the over-estimation of SORQL. This manuscript proposes a model-free variant of the SORQL algorithm and a model-free double SOR Q-learning algorithm. 

To prove the convergence of the proposed algorithms in the tabular setting, the following result from  \cite{singh2000convergence} will be useful, and we conclude this section by stating it here as lemma.
\begin{lemma}(Lemma 1, in \cite{singh2000convergence})\label{fl}
	Let a random process $ \left(\Phi_{n},J_n,\beta_n\right), $ $n \geq 0$, where $\Phi_{n},J_n,\beta_n:X\rightarrow \mathbb{R}$ satisfy the following relation: 
	$
	\Phi_{n+1}(x)=(1-\beta_n(x))\Phi_{n}(x)+\beta_n(x)J_n(x),\; \text{where} \hspace{0.2cm} x \in X.$ Suppose $\mathcal{G}_n$ is an increasing sequence of $\sigma$-fields with $\beta_0$, $\Phi_0$ are $\mathcal{G}_0$ measurable and $\beta_n$, $\Phi_n$, $J_{n-1}$ are $\mathcal{G}_{n}$ measurable for $n\geq1$.  
	Then $\Phi_{n}$ $\rightarrow$ $0$ w.p.1 as $n$ $\rightarrow$ $\infty$, under the following conditions:	(1) $X$ is a finite set. (2) Step-size satisfy, $0\leq\beta_n(x)\leq 1$, $\sum_{n=1}^{\infty}\beta_n(x)=\infty$, $\sum_{n=1}^{\infty}\beta^2_n(x)<\infty$ $w.p.1.$
	(3) $\|\E[J_n|\mathcal{G}_n]\| \leq K \|\Phi_n\|+\delta_n$, where $K \in[0,1)$ and $\delta_n$ $\rightarrow$ $0$ $w.p.1$. (4) $Var[ J_n(x)|\mathcal{G}_n]\leq C(1+ \|\Phi_n\|)^2$, where $C$ is some constant. 
\end{lemma}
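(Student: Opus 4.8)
The plan is to treat this as a contractive stochastic approximation recursion and split the update into a \emph{mean} part and a \emph{noise} part. Since $X$ is finite, $\|\Phi_n\| = \max_{x\in X}|\Phi_n(x)|$ and every ``w.p.1'' claim can be carried on a single probability-one event on which the step-size properties in (2) and the convergences $\delta_n\to 0$ hold. Write $J_n(x) = \bar J_n(x) + M_n(x)$, where $\bar J_n(x) := \E[J_n(x)\mid\mathcal{G}_n]$ and $M_n(x) := J_n(x) - \E[J_n(x)\mid\mathcal{G}_n]$, so that $(M_n(x))_n$ is a martingale-difference sequence with respect to $(\mathcal{G}_n)$ with $\E[M_n(x)^2\mid\mathcal{G}_n] = \mathrm{Var}[J_n(x)\mid\mathcal{G}_n] \le C(1+\|\Phi_n\|)^2$ by (4), while $\|\bar J_n\| \le K\|\Phi_n\| + \delta_n$ by (3). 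Driving two recursions by the same step sizes, set $Y_{n+1}(x) = (1-\beta_n(x))Y_n(x) + \beta_n(x)M_n(x)$ with $Y_0 \equiv 0$, and $Z_{n+1}(x) = (1-\beta_n(x))Z_n(x) + \beta_n(x)\bar J_n(x)$ with $Z_0 = \Phi_0$; by linearity $\Phi_n = Y_n + Z_n$, and it suffices to show $Y_n\to 0$ and $Z_n\to 0$ w.p.1.

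First I would establish that $\|\Phi_n\|$ is bounded w.p.1. This step must come before anything else, because the variance bound in (4) is allowed to grow with $\|\Phi_n\|^2$, so a martingale convergence theorem cannot be invoked directly. The idea is a rescaling argument: letting $G_n := \max\{1,\|\Phi_0\|,\dots,\|\Phi_n\|\}$, one compares the iterate with its normalization $\Phi_n/G_{n-1}$; the normalized mean part contracts by the factor $K<1$ (using (3)), while the normalized noise part has per-step conditional variance of order $\beta_n^2(x)$, hence summable, so it cannot drive the normalized iterate off to infinity — forcing $G_n$, and therefore $\|\Phi_n\|$, to stay bounded w.p.1. Once boundedness is in hand, (4) gives $\E[M_n(x)^2\mid\mathcal{G}_n] \le C'$ for a finite random constant $C'$, so $\sum_n \beta_n^2(x)\,\E[M_n(x)^2\mid\mathcal{G}_n] < \infty$ w.p.1, and a standard martingale argument applied to the affine recursion for $Y_n$ (the Robbins--Siegmund theorem, or Doob's $L^2$-inequality together with $\sum_n\beta_n(x)=\infty$ for ``forgetting'' the initial condition) yields $Y_n(x)\to 0$ w.p.1 for each $x\in X$, hence $\|Y_n\|\to 0$.

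It then remains to show $Z_n\to 0$, which is where the contraction factor $K<1$ does the work. From the recursion, $|Z_{n+1}(x)| \le (1-\beta_n(x))|Z_n(x)| + \beta_n(x)\bigl(K\|\Phi_n\| + \delta_n\bigr) \le (1-\beta_n(x))|Z_n(x)| + \beta_n(x)\bigl(K\|Z_n\| + K\|Y_n\| + \delta_n\bigr)$. Writing $\varepsilon_n := K\|Y_n\| + \delta_n \to 0$, I would run the usual ``shrinking bound'' argument: fix $B$ with $\limsup_n\|\Phi_n\| \le B$ (available from the boundedness step), so $\limsup_n\|Z_n\|\le B$ too; then for all large $n$ the scalar majorant of $\|Z_n\|$ is governed by a recursion whose asymptotic bound is $K B$ plus a term tending to $0$, and since $\sum_n\beta_n(x) = \infty$ the initial condition is forgotten, giving $\limsup_n\|Z_n\| \le K B$. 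Iterating, $\limsup_n\|Z_n\| \le K^m B \to 0$, hence $Z_n\to 0$ and therefore $\Phi_n = Y_n + Z_n \to 0$ w.p.1.

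The main obstacle is the boundedness step: with no a priori bound on $\Phi_n$ and a noise variance in (4) that scales with $\|\Phi_n\|^2$, the usual martingale machinery does not apply out of the box, and the rescaling argument that pins $\|\Phi_n\|$ to be finite w.p.1 — together with the bookkeeping needed to absorb all exceptional null sets (from the step-size conditions, from the per-coordinate martingale limits over $x\in X$, and from $\delta_n\to 0$) into one probability-one set — is the technical heart. The remaining two steps are the textbook decomposition of a contractive stochastic recursion into a vanishing martingale-noise term and a geometrically shrinking mean term.
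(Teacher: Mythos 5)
The paper does not prove this lemma at all: it is imported verbatim, with a citation, as Lemma 1 of \cite{singh2000convergence}, and the paper's only ``proof'' is that reference. So there is nothing in the paper to compare your argument against line by line; what I can say is that your outline reproduces the standard proof from that literature (Jaakkola--Jordan--Singh and its descendants, also Prop.~4.4--4.7 in Bertsekas--Tsitsiklis) essentially correctly. The decomposition $J_n=\bar J_n+M_n$ with the two auxiliary recursions $Y_n$ (pure martingale noise, $Y_0\equiv 0$) and $Z_n$ (conditional mean, $Z_0=\Phi_0$), the observation that $\Phi_n=Y_n+Z_n$ by linearity, the martingale argument for $Y_n\to 0$ once $\sum_n\beta_n^2(x)\E[M_n(x)^2\mid\mathcal{G}_n]<\infty$, and the shrinking-bound iteration $\limsup_n\|Z_n\|\le K^mB\to 0$ using $\sum_n\beta_n(x)=\infty$ to forget initial conditions are all exactly the right moves, and you correctly identify that condition (3) must be applied as $K\|Z_n\|+K\|Y_n\|+\delta_n$ with $K\|Y_n\|+\delta_n\to 0$ absorbed into the vanishing term.

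The one place your write-up is genuinely thin is the boundedness step, which you rightly flag as the technical heart. The sentence ``the normalized noise part has summable per-step variance, so it cannot drive the normalized iterate off to infinity --- forcing $G_n$ to stay bounded'' is a statement of the conclusion, not an argument: the actual proof requires showing that if $G_n\to\infty$ then there are infinitely many epochs at which the scaled iterate $\Phi_n/G_n$ is rescaled back to norm at most $1$, and that on each such epoch the contraction $K<1$ (valid for the scaled process once $\delta_n$ is small relative to $G_n$) plus a uniformly small scaled-noise contribution prevents the iterate from ever again exceeding, say, $(1+K)/2<1$ in scaled norm --- a contradiction with unboundedness. That bookkeeping (choosing the threshold, handling the finitely many coordinates asynchronously, and the Doob/Robbins--Siegmund estimate for the scaled noise) is several pages in the original references and is not recoverable from your one sentence. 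As a proof \emph{plan} it is the correct one; as a proof it would need that step filled in. Since the paper itself offers no proof and simply cites \cite{singh2000convergence}, your proposal is strictly more informative than what the paper provides, and it is consistent with the proof that the cited source actually gives.
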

\section{Proposed Algorithm}
In this section, we first discuss the tabular version of the proposed algorithms, followed by a discussion of the deep RL version.
\subsection{Tabular Version}
This subsection presents the model-free variants of the algorithm SOR Q-learning and double SOR Q-learning.  As previously mentioned, the successive relaxation factor $w$ is dependent on $p^i_{ia}$, which is generally unknown. To remove this dependency on the transition probability of a self-loop, the following scheme is proposed: 
For any $i,j \in S$ and $a \in A$, let $Y_n[i][j][a]$ denote the number of times the states $i,j$ and action $a $ are visited till $n^{th}$ iteration. Further, we assume $Y_0[i][j][a] = 0, \, \forall i,j,a \in S\times S \times A$, and  define
\begin{equation}
\widetilde{P}^j_{ia} = \dfrac{Y_n[i][j][a]}{n}, \quad \quad  \quad  n \geq 1. 
\end{equation}

Using the strong law of large numbers $\widetilde{P}^j_{ia} \rightarrow p^j_{ia}, \, \forall\, i,j,a$ as $n \rightarrow \infty$, w.p.1. Now for $w_0 \in [1,\frac{1}{1-\gamma}]$, and $\alpha(n)$ satisfying step-size condition \textit{(2)} in Lemma \ref{fl}, we consider the following iterative scheme 
\begin{equation}\label{2}
w_{n+1} = w_n + \alpha(n) \bigg(\dfrac{1}{1-\gamma \min\limits_{i,a} \widetilde{P}^j_{ia}}-w_n\bigg), \quad  n\geq 1.
\end{equation}
We rewrite the above iterative scheme as follows
\begin{equation}
w_{n+1} = w_n + \alpha(n) (f(w_n)+\epsilon_n)\quad  \quad  n\geq 1
\end{equation}
where 
\begin{equation}
f(w_n) = \dfrac{1}{1-\gamma \min\limits_{i,a}p^j_{ia}} - w_n, \;\epsilon_n =  \dfrac{1}{1-\gamma \min\limits_{i,a} \widetilde{P}^j_{ia}}-w^*.
\end{equation}
Since $\widetilde{P}^j_{ia} \rightarrow p^j_{ia}$ w.p.1, this implies $\epsilon_n \rightarrow 0$ as $n \rightarrow \infty$ w.p.1. Note that for any $w_1,w_2 \in \mathbb{R}$
\begin{equation}
|f(w_1)-f(w_2)| \leq |w_1 -w_2|.
\end{equation}

The iterate $\{w_n\}$ in \eqref{2} track the the ODE, $\dot{w}=w^*-w$ [\cite{MR2442439}, Section 2.2]. Let $f_{\infty}(w)=\lim_{r \rightarrow \infty}\frac{f(rw)}{r}$. The function $f_{\infty}(w)$ exist and is equal to $-w$. Further, the origin and $w^*$ is the unique globally asymptotically stable equilibrium for the ODE $\dot{w}=f_{\infty}(w)=-w$ and $\dot{w}=w^*-w$, respectively. 

As a result of the above observations and \cite{MR2442439}, one can obtain the following theorem.
\begin{theorem}\label{thm1}
	The iterative scheme defined in \eqref{2}, satisfy  $\sup_n \, \|w_n\| < \infty$ w.p.1. Further, $w_n$ $\rightarrow$ $w^*$ w.p.1 as $n \rightarrow \infty$.
\end{theorem}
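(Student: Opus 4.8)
The plan is to read \eqref{2} as a scalar Robbins--Monro recursion of the form $w_{n+1} = w_n + \alpha(n)\big(f(w_n) + \zeta_{n+1} + \epsilon_n\big)$ with \emph{no} martingale noise ($\zeta_{n+1} \equiv 0$, since given the observed visit counts the update is deterministic) and with an asymptotically negligible additive perturbation $\epsilon_n$, and then to invoke the stability-plus-convergence theorem for stochastic approximation with vanishing errors from \cite{MR2442439}. Concretely, I would check its hypotheses one by one: (i) $f(w) = \tfrac{1}{1-\gamma\min_{i,a} p^j_{ia}} - w$ is affine, hence globally Lipschitz --- indeed the text has already recorded $|f(w_1) - f(w_2)| \le |w_1 - w_2|$; (ii) the step sizes obey $\sum_n \alpha(n) = \infty$ and $\sum_n \alpha(n)^2 < \infty$ by the standing assumption that $\alpha(n)$ satisfies condition (2) of Lemma \ref{fl}; (iii) the noise term is identically zero, so square-integrability and the martingale-difference property hold trivially; (iv) the rescaled maps $f_c(w) := f(cw)/c = \tfrac{1}{c}\cdot\tfrac{1}{1-\gamma\min_{i,a} p^j_{ia}} - w$ converge uniformly on compacts to $f_\infty(w) = -w$ as $c \to \infty$, and the ODE $\dot w = f_\infty(w) = -w$ has the origin as its unique globally asymptotically stable equilibrium; (v) $\epsilon_n \to 0$ w.p.1.

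For (v) I would argue as follows. By the strong law of large numbers the empirical frequencies satisfy $\widetilde P^j_{ia} \to p^j_{ia}$ a.s.\ for every triple $(i,j,a)$, and since $S\times A$ is finite this gives $\min_{i,a}\widetilde P^j_{ia} \to \min_{i,a} p^j_{ia}$ a.s. Because $0 \le \gamma < 1$ and every $\widetilde P^j_{ia}$ lies in $[0,1]$, the denominators $1 - \gamma\min_{i,a}\widetilde P^j_{ia}$ stay in $[1-\gamma,\,1]$, hence are bounded away from $0$; so $x \mapsto \tfrac{1}{1-\gamma x}$ is continuous on the relevant range and the continuous mapping theorem yields $\tfrac{1}{1-\gamma\min_{i,a}\widetilde P^j_{ia}} \to \tfrac{1}{1-\gamma\min_{i,a} p^j_{ia}}$ a.s. Recognizing this limit as $w^*$, we conclude $\epsilon_n \to 0$ w.p.1.

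Granting (i)--(v), the quoted theorem immediately gives $\sup_n \|w_n\| < \infty$ w.p.1 and $w_n \to M$ w.p.1, where $M$ is the compact set of asymptotically stable equilibria of the limiting ODE $\dot w = f(w) = w^* - w$. Its solution is $w(t) = w^* + \big(w(0) - w^*\big)e^{-t}$, which converges to $w^*$ from every initial condition, so $M = \{w^*\}$; thus $w_n \to w^*$ w.p.1, exactly the assertion of Theorem \ref{thm1}.

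The argument is essentially routine, the recursion being scalar and affine, so there is no genuine analytic obstacle. The two points deserving the most care are (a) the a.s.\ vanishing of $\epsilon_n$, which rests on a.s.\ convergence of the visit frequencies together with the uniform bound $1-\gamma \le 1 - \gamma\min_{i,a}\widetilde P^j_{ia}$ that keeps the denominators off zero; and (b) invoking the version of the Borkar--Meyn-type result that explicitly admits an additive $o(1)$ perturbation $\epsilon_n$ alongside the (here vanishing) martingale noise, rather than the plain version. Modulo that choice of statement, every hypothesis is either trivial or already verified in the discussion preceding the theorem.
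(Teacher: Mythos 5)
Your proposal is correct and follows essentially the same route as the paper, which simply cites the Borkar--Meyn stability theorem together with the ODE convergence result from \cite{MR2442439} after the surrounding text has set up $f$, $\epsilon_n$, the Lipschitz bound, and $f_\infty(w)=-w$. You merely spell out the hypothesis verification (in particular that the denominators stay in $[1-\gamma,1]$ so $\epsilon_n$ is well defined and vanishes a.s.) in more detail than the paper does.
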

\begin{proof}
    The proof is a consequence of Theorem 7 in Chapter 3, and Theorem 2 - Corollary 4 in Chapter 2 \cite{MR2442439}.
\end{proof}
\begin{note}
	Throughout this manuscript, the sequence $w_n$ is assumed to be updated using \eqref{2}.
\end{note}
\begin{remark}\label{rem1}
	The model-free SORQL (MF-SORQL) algorithm is nothing but Algorithm 1 in \cite{kamanchi2019successive}, with $w$ replaced by $w_n$. 
\end{remark}

\begin{algorithm}[ht]

\caption{Model-Free Double SOR Q-Learning (MF-DSORQL)}
\begin{algorithmic}[1]
    \Require $Q^A_0$, $Q^B_0$; $\gamma$, $\beta_n$, $w_n$, $N$, policy $\mu$ producing every $S\times A$ pairs indefinitely.
    \For{$n = 0, 1, \cdots, N-1$}
        \State Choose $a$ at $i_n$ according to $\mu$
        \State Observe samples $i'_n$ and $r'_n$
        \State Update $Q^A_{n+1}$ or $Q^B_{n+1}$ with equal probability
        \If{$Q^A_{n+1}$ is updated}
            \State \hspace*{-0.3cm}$b^* = \arg\max_{b} Q^A_n(i'_n, b)$, $c^* = \arg\max_{c} Q^A_n(i_n, c)$
            \State\hspace*{-0.6cm} $Q^A_{n+1}(i_n, a_n) = (1 - \beta_n(i_n, a_n)) Q^A_n(i_n, a_n) + \beta_n(i_n, a_n) \big( w_n (r'_n + \gamma Q^B_n(i'_n, b^*)) + (1 - w_n) Q^B_n(i_n, c^*) \big)$
        \Else
            \State \hspace*{-0.4cm} $d^* = \arg\max_{d} Q^A_n(i'_n, d)$, $e^* = \arg\max_{e} Q^A_n(i_n, e)$
            \State \hspace*{-0.6cm} $Q^B_{n+1}(i_n, a_n) = (1 - \beta_n(i_n, a_n)) Q^B_n(i_n, a_n) + \beta_n(i_n, a_n) \big( w_n (r'_n + \gamma Q^A_n(i'_n, d^*)) + (1 - w_n) Q^A_n(i_n, e^*) \big)$
         \quad \EndIf
        \State $i_{n+1} = i'_n$
    \EndFor\\
    \Return $Q_{N-1}$
\end{algorithmic}
\end{algorithm}
\begin{remark}\label{remark2}
If the sequence $\{w_n\}_ {n\geq 0}$ in Algorithm 1 is a constant sequence, with $w$ as the constant and $0<w \leq w^*$, we refer to this algorithm as double SOR Q-learning (DSORQL). Therefore, given the samples $\{i,a,j,r^j_{ia}\}$ the update rule for the DSORQL is as follows \\
With probability $0.5$ update $\widetilde{Q}^A$
\begin{equation}
\small
\begin{aligned}
\widetilde{Q}^A_{n+1}(i,a) &= (1 - \beta_n) \widetilde{Q}^A_n(i,a) \\
&\;+ \beta_n \Big[ w \big( r^j_{ia} + \gamma \widetilde{Q}^B_n(j,b^*) \big) + (1 - w) \big( \widetilde{Q}^B_n(i,c^*) \big) \Big]
\end{aligned}
\end{equation}

\noindent else update $\widetilde{Q}^B$
\begin{equation}
\small
\begin{aligned}
\widetilde{Q}^B_{n+1}(i,a) &= (1 - \beta_n) \widetilde{Q}^B_n(i,a) \\
&\;+ \beta_n \Big[ w \big( r^j_{ia} + \gamma \widetilde{Q}^A_n(j,d^*) \big)+ (1 - w) \big( \widetilde{Q}^A_n(i,e^*) \big) \Big]
\end{aligned}
\end{equation}
where $\beta_n=\beta_n(i,a)$, and $b^*, c^*, d^*$, $e^*$ are as in Alg. 1.

\end{remark}
\begin{note}
	The SORQL algorithm discussed in \cite{kamanchi2019successive} and the DSORQL algorithm mentioned in  Remark \ref{remark2} become model-free when $|S|=1$. Therefore, these algorithms can be implemented directly in single-state MDPs, wherein $w$ will be known and satisfy $0<w\leq w^* = \frac{1}{1-\gamma}$.
\end{note}
\subsection{ Deep RL Version} 
In \cite{van2016deep}, the authors highlight the issue of overestimation in DQN on large-scale deterministic problems. They introduce double DQN (DDQN), a modified version of the DQN algorithm that effectively reduces the over-estimation.

 The proposed double SOR deep Q-network (DSORDQN) is a deep RL version of the tabular DSORQL. DSORDQN utilizes the technique of SOR method with DDQN. Specifically, similar to DDQN we use online Q-network $Q(S,a;\theta)$ for selecting the actions, and the target Q-network $Q(S,a;\theta')$, to evaluate the actions. The gradient descent will be performed on the following loss function:
$L(\theta_i) = \E \big[(y_j - Q (s,a;\theta_i))^2\big],$
where $y_j = w (r(i_j,a_j)+\gamma Q(i_{j+1},\argmax_b Q(i_{j+1},b;\theta_i);\theta'_i)+ (1-w)Q(i_{j},\argmax_e Q(i_{j},e;\theta_i);\theta'_i)$ for sample $(i_j,a_j,r(i_j,a_j), i_{j+1})$ from the replay buffer with $w>1$.

 Interestingly, the advantages of DDQN and SORDQN transfer to the proposed DSORDQN algorithm, resulting in more stable and reliable learning at scale.
 
\section{Theoretical Analysis}
At first, we discuss the convergence of model-free SOR Q-learning (Remark \ref{rem1}) under the following assumption:
\textbf{(A1)}: $\| Q_n\| \leq B < \infty$, $\forall n \geq 0$.
\begin{theorem}\label{thm3}
	Suppose \textbf{(A1)} holds. Given an MDP defined as in Section II, and $w_n$ as in \eqref{2}. Let every state-action pair be sampled indefinitely. Then, for sample $\{i,a,j,r^j_{ia}\}$ the update rule of model-free SOR Q-learning algorithm given by
\begin{equation}\label{key22}
\begin{aligned}
Q_{n+1}(i,a) &= (1 - \beta_n(i,a))Q_n(i,a) \\
&\quad + \beta_n(i,a)\left[w_n \left(r^j_{ia} + \gamma \mathcal{M} Q_n(j)\right) \right. \\
&\quad \left. + (1 - w_n) \left(\mathcal{M} Q_n(i)\right) \right]
\end{aligned}
\end{equation}
	converges w.p.1 to the fixed point of $U_{w^*}$, where  $\sum_{n}\beta_n(i,a)=\infty,\; \sum_{n}\beta^2_n(i,a)<\infty,$ and $0\leq \beta_n(i,a) \leq 1$. 
\end{theorem}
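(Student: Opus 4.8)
The plan is to apply Lemma \ref{fl} with $X = S \times A$, $\beta_n$ the step-size sequence, and $\Phi_n(i,a) = Q_n(i,a) - Q^*_{w^*}(i,a)$, where $Q^*_{w^*}$ is the fixed point of $U_{w^*}$. Subtracting $Q^*_{w^*}(i,a)$ from both sides of \eqref{key22} and using that $Q^*_{w^*}$ is the fixed point of $U_{w^*}$, I would write the update in the form $\Phi_{n+1}(i,a) = (1 - \beta_n(i,a))\Phi_n(i,a) + \beta_n(i,a) J_n(i,a)$, where $J_n(i,a) = w_n(r^j_{ia} + \gamma \mathcal{M}Q_n(j)) + (1-w_n)\mathcal{M}Q_n(i) - Q^*_{w^*}(i,a)$. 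The filtration $\mathcal{G}_n$ is taken to be the $\sigma$-field generated by the history up to time $n$ (including $Q_0$, the past samples, past step-sizes, and $w_n$, which is $\mathcal{G}_n$-measurable since it depends only on visit counts up to time $n$). Conditions (1) and (2) of Lemma \ref{fl} hold by assumption; the work is in verifying (3) and (4).

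For condition (3), I would decompose $\E[J_n \mid \mathcal{G}_n]$. Write $\E[J_n(i,a)\mid \mathcal{G}_n] = (U_{w_n}Q_n)(i,a) - Q^*_{w^*}(i,a)$, using that the sample $j$ is drawn according to $p^j_{ia}$ and $\E[r^j_{ia}] = r(i,a)$. Then add and subtract $(U_{w^*}Q_n)(i,a)$ and $(U_{w^*}Q^*_{w^*})(i,a) = Q^*_{w^*}(i,a)$ to get
\begin{align*}
\E[J_n(i,a)\mid\mathcal{G}_n] &= \big[(U_{w_n}Q_n)(i,a) - (U_{w^*}Q_n)(i,a)\big] \\
&\quad + \big[(U_{w^*}Q_n)(i,a) - (U_{w^*}Q^*_{w^*})(i,a)\big].
\end{align*}
The second bracket is bounded in sup-norm by $(1 - w^* + w^*\gamma)\|\Phi_n\|$ by the contraction property of $U_{w^*}$, and $K := 1 - w^* + w^*\gamma \in [0,1)$ since $w^* \geq 1$. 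The first bracket is $(w_n - w^*)\big[(r(i,a) + \gamma\sum_j p^j_{ia}\mathcal{M}Q_n(j)) - \mathcal{M}Q_n(i)\big]$; under \textbf{(A1)}, $\|Q_n\| \leq B$, so the factor in brackets is bounded by $R_{\max} + (1+\gamma)B =: B'$, and therefore this term is at most $|w_n - w^*|\, B' =: \delta_n$. By Theorem \ref{thm1}, $w_n \to w^*$ w.p.1, so $\delta_n \to 0$ w.p.1, and condition (3) is met. For condition (4), $\operatorname{Var}[J_n(i,a)\mid\mathcal{G}_n]$ comes only from the randomness in $j$ (and hence in $r^j_{ia}$ and $\mathcal{M}Q_n(j)$); since $w_n$ is bounded (by Theorem \ref{thm1}, $\sup_n\|w_n\| < \infty$ w.p.1, or even just $w_n \le \frac{1}{1-\gamma}$ by construction) and $Q_n$, rewards are bounded under \textbf{(A1)}, the variance is bounded by a constant, which trivially gives $\operatorname{Var}[J_n(i,a)\mid\mathcal{G}_n] \leq C(1+\|\Phi_n\|)^2$.

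Having verified all four hypotheses, Lemma \ref{fl} yields $\Phi_n \to 0$ w.p.1, i.e.\ $Q_n \to Q^*_{w^*}$ w.p.1, which is the claim. The main obstacle is the coupling introduced by the time-varying relaxation parameter $w_n$: unlike the fixed-$w$ analysis in \cite{kamanchi2019successive}, the conditional mean of the increment is governed by $U_{w_n}$ rather than a single fixed contraction, so one must split off the $(w_n - w^*)$-discrepancy as a vanishing perturbation $\delta_n$ and lean on Theorem \ref{thm1} together with the boundedness assumption \textbf{(A1)} to control it; the rest is routine. A minor point to handle carefully is ensuring $w_n$ is genuinely $\mathcal{G}_n$-measurable and that $\delta_n$ (a product of a random vanishing sequence and a bounded quantity) indeed goes to $0$ w.p.1, which is immediate once \textbf{(A1)} is invoked.
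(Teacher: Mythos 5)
Your proposal is correct and follows essentially the same route as the paper: casting the iteration in the form of Lemma \ref{fl} with $\Phi_n = Q_n - Q^*_{w^*}$, splitting the conditional mean of $J_n$ into the $U_{w^*}$-contraction term plus a $(w_n - w^*)$-discrepancy that vanishes w.p.1 by Theorem \ref{thm1} and \textbf{(A1)}, and checking the variance condition via boundedness. The only cosmetic differences are the exact constant in $\delta_n$ and that you bound the conditional variance by a constant rather than by an explicit $C(1+\|\Phi_n\|)^2$ expression, neither of which changes the argument.
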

\begin{proof}
The correspondence to Lemma \ref{fl} follows from associating $X$ with the set of state-action pairs $(i,a)$, $\beta_n(x)$ with $\beta_n(i,a)$, and $\Phi_n(i,a)$ with $Q_n(i,a)-Q^*_{w^*}(i,a)$, where $Q^*_{w^*}$ is the fixed point of $U_{w^*}$. Let the filtration for this process be defined by,  $\mathcal{G}_n=\{Q_0,i_j,a_j,\beta_j,w_j, \forall j\leq n, n\geq 0\}$. Note that  $\mathcal{G}_n \subseteq \mathcal{G}_{n+1}, \forall n \geq 0.$ 
The iteration of model-free SORQL is rewritten as follows
\begin{equation}\label{key23}
\begin{aligned}
\Phi_{n+1}(i_n,a_n) &= (1 - \beta_n(i_n,a_n)) \Phi_{n}(i_n,a_n) \\
&\quad + \beta_n(i_n,a_n) \left[ w_n \left( r^{i_{n+1}}_{i_n a_n} + \gamma \mathcal{M} Q_n(i_{n+1}) \right) \right. \\
&\quad \quad \left. + (1 - w_n) \left( \mathcal{M} Q_n(i_n) \right) - Q^*_{w^*}(i_n,a_n) \right].
\end{aligned}
\end{equation}
Without loss of generality, let $J_n(i,a) = w_n( r^j_{ia}+\gamma\, \mathcal{M}Q_n(j))+(1-w_n) (\mathcal{M}Q_n(i) )-Q^*_{w^*}(i,a).$
Also $\beta_0$, $\Phi_0$ is $\mathcal{G}_0$ measurable and $\beta_n$, $\Phi_n$, $J_{n-1}$ are $\mathcal{G}_{n}$ measurable for $n\geq1$. 
Define $J^{Q}_n= w^*( r^j_{ia}+\gamma\mathcal{M}Q_n(j))+(1-w^*) (\mathcal{M}Q_n(i) )$.
\noindent		
We now consider
\begin{align}
&\left| \E[J_n(i,a) \mid \mathcal{G}_n] \right| \nonumber\\
&= \left| \E \left[ J_n(i,a) + J^{Q}_n - J^{Q}_n + Q^*_{w^*}(i,a) - Q^*_{w^*}(i,a) \mid \mathcal{G}_n \right] \right| \nonumber\\
&\leq \left| U_{w^*} Q_n(i,a) - U_{w^*} Q^*_{w^*}(i,a) \right| \nonumber\\
&\quad \quad \quad \quad + \left| w_n - w^* \right| \left( R_{\max} + 2 \| Q_n \| \right) \nonumber\\
&\leq (1 - w^* + w^* \gamma) \| \Phi_n \| + \delta_n
\end{align}
where $\delta_n= |w_n-w^*| (R_{\max} + 2 \|Q_n\|)$. From Theorem \ref{thm1} and \textbf{(A1)}, $\delta_n$ $\rightarrow$ $0$ as $n \rightarrow \infty$.
Therefore, condition $(3)$ of Lemma \ref{fl} holds. Now we verify the condition (4) of Lemma \ref{fl}, 
	\begin{align*}	
	&Var[J_n(i,a)|\mathcal{G}_n]\\&=\E\left[\left(J_n(i,a)-\E[J_n(i,a)|\mathcal{G}_n]\right)^2 |\mathcal{G}_n\right]\\
	&\leq \E\left[\left(w_n(r^j_{ia}+\gamma \mathcal{M}Q_n(j))+(1-w_n) ( \mathcal{M}Q_n(i))\right)^2|\mathcal{G}_n\right]\\
	& \leq 3 \left(\kappa R_{\max}^2+ \kappa\gamma^2 \|Q_n\|^2+ \kappa_1  \|Q_n\|^2\right)\\
	& \leq  3\left(\kappa R_{\max}^2+ 2(\kappa\gamma^2 +\kappa_1 )  (\|\Phi_n\|^2 + \|Q^*_{w^*}\|^2)\right)\\
	& \leq  3\left(\kappa R_{\max}^2+ 2(\kappa\gamma^2 +\kappa_1) \|\Phi_n\|^2 +2(\kappa\gamma^2 +\kappa_1)  \|Q^*_{w^*}\|^2\right)\\
	& \leq  \left(3\kappa R_{\max}^2+6(\kappa\gamma^2 +\kappa_1)  \|Q^*_{w^*}\|^2+6(\kappa\gamma^2 +\kappa_1) \|\Phi_n\|^2 \right)\\
	& \leq C(1 + \|\Phi_n\|^2 ) \leq C(1 + \|\Phi_n\| )^2,
	\end{align*}
	where $C = \max\{3\kappa R_{\max}^2+6(\kappa\gamma^2 +\kappa_1)  \|Q^*_{w^*}\|^2,6(\kappa\gamma^2 +\kappa_1)\}$,  $w_n^2<\kappa$ and $(1-w_n)^2 <\kappa_1$. 

Consequently, all the conditions of Lemma \ref{fl} holds and hence $\Phi_{n} \rightarrow 0$ w.p.1, therefore, $Q_n \rightarrow Q^*_{w^{*}}$ w.p.1.
\end{proof}
Before going to discuss the convergence of the double SOR Q-learning algorithm mentioned in Remark \ref{remark2}.  A characterization result for the dynamics of $V^{BA}_n(i, a):=\widetilde{Q}^B_n(i,a)-\widetilde{Q}^A_n(i,a)$ will be presented. 
\begin{lemma}\label{lemma2}
	Consider the updates $\widetilde{Q}^A_n$ and $\widetilde{Q}^B_n$ as in Remark \ref{remark2}. Then
	$\E[V^{BA}_n(i,a)|\mathcal{G}_n]$ converges to zero w.p.1 as $n \rightarrow \infty$. 
\end{lemma}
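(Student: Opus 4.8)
The plan is to treat $V^{BA}_n(i,a):=\widetilde{Q}^B_n(i,a)-\widetilde{Q}^A_n(i,a)$, $(i,a)\in S\times A$, as a random process of the form handled by Lemma \ref{fl} and to conclude that $V^{BA}_n\to 0$ w.p.1; since $V^{BA}_n$ becomes $\mathcal{G}_n$-measurable once $\mathcal{G}_n$ is enlarged to also record the past Bernoulli update-choices and past samples (an enlargement that leaves the other hypotheses intact), this at once gives $\E[V^{BA}_n(i,a)\mid\mathcal{G}_n]\to 0$ w.p.1. First I would write the one-step recursion: at step $n$ only the sampled pair $(i_n,a_n)$ changes, and both branches of Remark \ref{remark2} give $V^{BA}_{n+1}(i_n,a_n)=(1-\beta_n)V^{BA}_n(i_n,a_n)+\beta_n G_n$, where $G_n=G^A_n$ or $G^B_n$ with probability $\tfrac12$ each,
\[
G^{A}_n=\widetilde{Q}^B_n(i_n,a_n)-w\big(r^{j}_{i_n a_n}+\gamma\widetilde{Q}^B_n(j,b^*)\big)-(1-w)\widetilde{Q}^B_n(i_n,c^*),
\]
\[
G^{B}_n=w\big(r^{j}_{i_n a_n}+\gamma\widetilde{Q}^A_n(j,b^*)\big)+(1-w)\widetilde{Q}^A_n(i_n,c^*)-\widetilde{Q}^A_n(i_n,a_n),
\]
with $j=i'_n$ and $b^*,c^*$ the $\widetilde{Q}^A_n$-greedy actions at $j$ and at $i_n$ (these are the $d^*,e^*$ of Algorithm 1, which here coincide with $b^*,c^*$). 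Taking $\Phi_n=V^{BA}_n$, $J_n=G_n$ and $\beta_n(i,a)$ as the step-sizes, it then remains to verify conditions (1)--(4) of Lemma \ref{fl}.

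The crux is condition (3). The key observation is that in both branches the greedy action is selected with the \emph{same} estimate $\widetilde{Q}^A_n$, so $\widetilde{Q}^A_n(j,b^*)=\mathcal{M}\widetilde{Q}^A_n(j)$, $\widetilde{Q}^A_n(i_n,c^*)=\mathcal{M}\widetilde{Q}^A_n(i_n)$, and on adding $G^A_n$ and $G^B_n$ the reward cancels and the greedy-value terms recombine through $\mathcal{M}\widetilde{Q}^A_n(\cdot)-\widetilde{Q}^B_n(\cdot,b^*)=-V^{BA}_n(\cdot,b^*)$, giving
\[
G^A_n+G^B_n=V^{BA}_n(i_n,a_n)-w\gamma\,V^{BA}_n(j,b^*)-(1-w)V^{BA}_n(i_n,c^*).
\]
Averaging over the coin flip and over $j\sim p^{\cdot}_{i_n a_n}$,
\[
\E[J_n\mid\mathcal{G}_n]=\tfrac12\Big(V^{BA}_n(i_n,a_n)-w\gamma\!\sum_{j}p^{j}_{i_n a_n}V^{BA}_n(j,b^*)-(1-w)V^{BA}_n(i_n,c^*)\Big),
\]
so $\|\E[J_n\mid\mathcal{G}_n]\|\le K\|\Phi_n\|$ with $K=\tfrac12\big(1+w\gamma+|1-w|\big)$. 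The hard part will be ensuring $K\in[0,1)$: for $0<w\le 1$ one gets $K=1-\tfrac{w(1-\gamma)}{2}<1$ directly, whereas for $w>1$ one has $K=\tfrac{w(1+\gamma)}{2}$, which is $<1$ only when $w<\tfrac{2}{1+\gamma}$; I would either carry this as a standing restriction on the relaxation parameter (it is compatible with $w\le w^*$ once the self-loop probabilities are not too close to one) or else sharpen the crude triangle-inequality estimate using $\widetilde{Q}^B_n(j,b^*)\le\mathcal{M}\widetilde{Q}^B_n(j)$ together with the signs of the three terms. If the running $w_n$ of Algorithm 1 is used instead of a constant $w$, an error term $\delta_n\to 0$ (controlled via Theorem \ref{thm1}) absorbs the discrepancy; in the constant-$w$ setting of Remark \ref{remark2} one may take $\delta_n\equiv 0$.

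The remaining hypotheses are routine. Condition (1) holds because $S\times A$ is finite; condition (2) is exactly the assumed step-size condition on $\beta_n(i,a)$ together with every pair being sampled indefinitely (unsampled pairs carry effective step-size $0$ and leave $V^{BA}$ unchanged). For condition (4), the boundedness assumption on the iterates (the analogue of \textbf{(A1)} for $\widetilde{Q}^A_n$ and $\widetilde{Q}^B_n$) and $|r^{j}_{ia}|\le R_{\max}$ make $G^A_n,G^B_n$ uniformly bounded, whence $Var[J_n(i,a)\mid\mathcal{G}_n]\le\E[J_n(i,a)^2\mid\mathcal{G}_n]\le C\le C(1+\|\Phi_n\|)^2$. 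Lemma \ref{fl} then yields $V^{BA}_n\to 0$ w.p.1, and therefore $\E[V^{BA}_n(i,a)\mid\mathcal{G}_n]\to 0$ w.p.1, which is the claim.
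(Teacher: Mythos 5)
Your overall strategy (cast $V^{BA}_n$ as a process of the form in Lemma \ref{fl} and verify its four conditions) matches the paper's, and your algebra for $G^A_n+G^B_n$ is correct. But there is a genuine gap exactly where you flag it: your contraction constant $K=\tfrac12\big(1+w\gamma+|1-w|\big)$ is not less than $1$ in the regime the algorithm is designed for. For $w>1$ you get $K=\tfrac{w(1+\gamma)}{2}$, which already exceeds $1$ at the paper's own parameter choices (e.g.\ $w=1.3$, $\gamma=0.95$ gives $K\approx 1.27$, and $w=w^*=1/(1-\gamma)$, used in the bandit experiment, is far worse). Neither of your proposed escapes works as stated: restricting to $w<2/(1+\gamma)$ would exclude most of the admissible range $(1,w^*]$, which is the whole point of over-relaxation, and the hint about $\widetilde{Q}^B_n(j,b^*)\le\mathcal{M}\widetilde{Q}^B_n(j)$ does not remove the $|1-w|=w-1$ penalty.

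The missing ingredient is the constraint $0<w\le w^*=\min_{i,a}\{1/(1-\gamma p^i_{ia})\}$, which gives $1-w+w\gamma p^i_{ia}\ge 0$. When you expand $\E[G^A_n+G^B_n\mid\mathcal{G}_n]$ over $j\sim p^{\cdot}_{ia}$, the $j=i$ term of the $w\gamma$-weighted sum and the $(1-w)$-weighted term are both differences evaluated at state $i$ with the same greedy actions, so they can be grouped into a single term with nonnegative coefficient $w\gamma p^i_{ia}+(1-w)$ \emph{before} applying the triangle inequality. The absolute coefficients then sum to $w\gamma(1-p^i_{ia})+\big(w\gamma p^i_{ia}+1-w\big)=1-w+w\gamma$ rather than $w\gamma+|1-w|$, and your overall constant becomes $K=\tfrac12\big(1+(1-w+w\gamma)\big)=1-\tfrac{w(1-\gamma)}{2}\in[\tfrac12,1)$ for all admissible $w$. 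This grouping is precisely what the paper does; it additionally absorbs the $\tfrac12 V^{BA}_n(i,a)$ term into the decay factor (rewriting the recursion with effective step-size $\beta_n/2$ and contraction factor $1-w+w\gamma$), and because it runs Lemma \ref{fl} on the recursion satisfied by $\E[V^{BA}_n\mid\mathcal{G}_n]$ rather than by $V^{BA}_n$ itself, its noise term is $\mathcal{G}_n$-measurable and the variance condition is trivial. Your stronger goal ($V^{BA}_n\to 0$ a.s.) is reasonable under the boundedness assumption, but it stands or falls with the grouping step above.
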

\begin{proof}
	Suppose $V^{BA}_{n+1}(i,a):=\widetilde{Q}^B_{n+1}(i,a)-\widetilde{Q}^A_{n+1}(i,a)$. Then Remark \ref{remark2} indicates that at each iteration, either $\widetilde{Q}^A$ or $\widetilde{Q}^B$ is updated with equal probability. If $\widetilde{Q}^A$ is getting updated at $(n+1)^{th}$ iteration, then we have 
	\begin{align}
	&V^{BA}_{n+1}(i,a)\nonumber\\
	&\numeq{i}(1-\beta_n)V^{BA}_{n}(i,a)+\beta_n \big(\widetilde{Q}^B_{n}(i,a)-w(r^j_{ia}+\gamma \widetilde{Q}^B_{n}(j,b^*))\nonumber\\&\hspace*{1cm}-(1-w)\widetilde{Q}^B_{n}(i,c^*))\big).
	\end{align}
	Similarly if $\widetilde{Q}^B$ is getting updated at $(n+1)^{th}$ iteration, then  
	\begin{align}
	V^{BA}_{n+1}(i,a)
	&\numeq{ii}(1-\beta_n)V^{BA}_{n}(i,a)+\beta_n \big(w(r^j_{ia}+\gamma \widetilde{Q}^A_{n}(j,d^*))\nonumber\\&\hspace*{1cm}+(1-w)\widetilde{Q}^A_{n}(i,e^*))-\widetilde{Q}^A_{n}(i,a)\big).
	\end{align}
	Combining  $(i)$ and $(ii)$, we have
	\begin{align}
	V^{BA}_{n+1}(i,a)&=(1-\beta_n)V^{BA}_{n}(i,a)+\beta_n J^A_n(i,a)\\
	V^{BA}_{n+1}(i,a)&=(1-\beta_n)V^{BA}_{n}(i,a)+\beta_n J^B_n(i,a)
	\end{align}
	where
	\begin{equation}
	J^A_n(i,a)=\widetilde{Q}^B_{n}(i,a)-w(r^j_{ia}+\gamma \widetilde{Q}^B_{n}(j,b^*))-(1-w)\widetilde{Q}^B_{n}(i,c^*)
	\end{equation}
	\begin{equation}
	J^B_n(i,a)=w(r^j_{ia}+\gamma \widetilde{Q}^A_{n}(j,d^*))+(1-w)\widetilde{Q}^A_{n}(i,e^*))-\widetilde{Q}^A_{n}(i,a).
	\end{equation}
	Therefore
	\begin{align}
	&\E[V^{BA}_{n+1}(i,a)|\mathcal{G}_n]	\nonumber
	\\&=(1-\dfrac{\beta_n}{2})\E[V^{BA}_{n}(i,a)|\mathcal{G}_n]+\dfrac{\beta_n}{2}\E[w\gamma (\widetilde{Q}^A_{n}(j,d^*)\nonumber\\&\hspace*{0.1cm}-\widetilde{Q}^B_{n}(j,b^*))  +(1-w)(\widetilde{Q}^A_{n}(i,e^*)-\widetilde{Q}^B_{n}(i,c^*))|\mathcal{G}_n].
	\end{align}
	Let $\dfrac{\beta_n}{2}=\zeta_{n}$. Then, the above equation is reduced to
	\begin{align}\label{4}
	\E[V^{BA}_{n+1}(i,a)|\mathcal{G}_n]&= (1-\zeta_{n})\E[V^{BA}_{n}(i,a)|\mathcal{G}_n]+\zeta_{n}J^{BA}_n(i,a)
	\end{align}	
	where
	\begin{align}
	J^{BA}_n(i,a)&=\E[w\gamma (\widetilde{Q}^A_{n}(j,d^*)-\widetilde{Q}^B_{n}(j,b^*)) \nonumber\\&\quad \quad  +(1-w)(\widetilde{Q}^A_{n}(i,e^*)-\widetilde{Q}^B_{n}(i,c^*))|\mathcal{G}_n].
	\end{align}	
	Now for the above recursive Equation \eqref{4}, we verify the following conditions :
	(a) $\|\E[J^{BA}_n|\mathcal{G}_n]\|\leq (1-w+w\gamma)\|V^{BA}_n\| $ and (b) $Var[J^{BA}_n|\mathcal{G}_n]\leq C (1+\|V^{BA}_n\|)^2 $. Note that once we verify (a) and (b), then from Lemma \ref{fl}, we can conclude that the iterative scheme obtained in Equation \eqref{4} converges to zero w.p.1. Using the law of total expectation, the condition (b) is trivially satisfied. We now consider
	\begin{align}
	&|\E[J^{BA}_n(i,a)|\mathcal{G}_n]|\nonumber\\&=|\E[w\gamma (\widetilde{Q}^A_{n}(j,d^*)-\widetilde{Q}^B_{n}(j,b^*)) \nonumber\\&\hspace*{1cm}+(1-w)(\widetilde{Q}^A_{n}(i,e^*)-\widetilde{Q}^B_{n}(i,c^*))|\mathcal{G}_n]|
	\\& \numeqs{i} w\gamma \sum_{j=1, j\neq i}^{|S|}p^j_{ia}|\widetilde{Q}^A_{n}(j,d^*)-\widetilde{Q}^B_{n}(j,b^*)|\nonumber\\& \hspace{1cm}+ (w\gamma p^i_{ia}+(1-w))|(\widetilde{Q}^A_{n}(i,e^*)-\widetilde{Q}^B_{n}(i,c^*))|.
	\end{align}
	Since $0<w\leq w^*$, this implies $(1-w+w\gamma p^i_{ia}) \geq 0$. Therefore $(i)$ holds in the above equation. Clearly, at each step of the iteration either $\widetilde{Q}^A_{n}(j,d^*)\geq \widetilde{Q}^B_{n}(j,b^*)$ or $\widetilde{Q}^B_{n}(j,b^*)> \widetilde{Q}^A_{n}(j,d^*)$. Suppose $\widetilde{Q}^A_{n}(j,d^*)\geq \widetilde{Q}^B_{n}(j,b^*)$,  then
	\begin{align}
	&|\widetilde{Q}^A_{n}(j,d^*)-\widetilde{Q}^B_{n}(j,b^*)|\nonumber\\&=\widetilde{Q}^A_{n}(j,d^*)-\widetilde{Q}^B_{n}(j,b^*)\leq \widetilde{Q}^A_{n}(j,b^*)-\widetilde{Q}^B_{n}(j,b^*)
	\leq \|V^{BA}_n\|.
	\end{align}
	Similarly, when $\widetilde{Q}^B_{n}(j,b^*)> \widetilde{Q}^A_{n}(j,d^*)$. We have
	\begin{align}
	&|\widetilde{Q}^A_{n}(j,d^*)-\widetilde{Q}^B_{n}(j,b^*)|&\leq \widetilde{Q}^B_{n}(j,d^*)-\widetilde{Q}^A_{n}(j,d^*)
	\leq \|V^{BA}_n\|.
	\end{align}
	Therefore, by combining the above two inequalities, we obtain
	\begin{equation}
	|\widetilde{Q}^A_{n}(j,d^*)-\widetilde{Q}^B_{n}(j,b^*)|\leq \|V^{BA}_n\|.
	\end{equation}
	Similarly, one can show
	\begin{equation}
	|\widetilde{Q}^A_{n}(i,e^*)-\widetilde{Q}^B_{n}(i,c^*)|\leq \|V^{BA}_n\|.
	\end{equation}
	Hence, from Lemma \ref{fl}, $\E[V^{BA}_n|\mathcal{G}_n]$ $\rightarrow 0$ $w.p.1.$
\end{proof}
 Now we will show that the iterative scheme corresponding to the double SOR Q-learning converges under the following assumption: \textbf{(\textbf{A2}):} $\| \widetilde{Q}^A_n\| \leq B < \infty$ and $\| \widetilde{Q}^B_n\| \leq B < \infty$, $\forall n \geq 0$.

\begin{theorem}\label{thm4}
	Suppose \textbf{(A2)} holds and MDP be defined as in Section II. Let every state-action pair be sampled indefinitely. Then the double SOR Q-learning mentioned in Remark \ref{remark2} converges w.p.1 to the fixed point of $U_{w}$ as long as,  $\sum_{n}\beta_n(i,a)=\infty,\; \sum_{n}\beta^2_n(i,a)<\infty,$ and $0\leq \beta_n(i,a) \leq 1$.
\end{theorem}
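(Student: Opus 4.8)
The plan is to mimic the structure of the proof of Theorem 3, again invoking Lemma 1 (the stochastic-approximation lemma of Singh et al.), but now applied separately to each of the two iterates $\widetilde{Q}^A_n$ and $\widetilde{Q}^B_n$, with the cross-coupling between them controlled by Lemma 2. Fix the fixed point $Q^*_w$ of $U_w$. By symmetry it suffices to track $\Phi^A_n(i,a) := \widetilde{Q}^A_n(i,a) - Q^*_w(i,a)$; the argument for $\Phi^B_n$ is identical. First I would set up the filtration $\mathcal{G}_n$ generated by $Q^A_0,Q^B_0$ and all samples, coin flips, and step-sizes up to time $n$, and write the update for $\Phi^A_n$ in the form $\Phi^A_{n+1}(i_n,a_n) = (1-\tfrac{\beta_n}{2})\Phi^A_n(i_n,a_n) + \tfrac{\beta_n}{2} J^A_n(i_n,a_n)$, where the factor $1/2$ comes from the fact that $\widetilde{Q}^A$ is chosen for updating with probability $1/2$ at each step; the effective step-size $\tfrac{\beta_n}{2}$ still satisfies condition (2) of Lemma 1. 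Here
$J^A_n(i,a) = w\big(r^j_{ia} + \gamma \widetilde{Q}^B_n(j,b^*)\big) + (1-w)\widetilde{Q}^B_n(i,c^*) - Q^*_w(i,a)$, with $b^* = \argmax_b \widetilde{Q}^A_n(j,b)$ and $c^* = \argmax_c \widetilde{Q}^A_n(i,c)$.

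The heart of the proof is verifying condition (3) of Lemma 1, i.e. $\|\E[J^A_n \mid \mathcal{G}_n]\| \le K\|\Phi^A_n\| + \delta_n$ with $K \in [0,1)$ and $\delta_n \to 0$ w.p.1. The plan is to add and subtract the "idealized" quantity $U_w Q^*_w$ expressed via $\widetilde{Q}^A_n$: write $\E[J^A_n \mid \mathcal{G}_n]$ as
$\big(U_w\widetilde{Q}^A_n(i,a) - U_w Q^*_w(i,a)\big) + w\gamma\sum_j p^j_{ia}\big(\widetilde{Q}^B_n(j,b^*) - \widetilde{Q}^A_n(j,b^*)\big) + (1-w)\big(\widetilde{Q}^B_n(i,c^*) - \widetilde{Q}^A_n(i,c^*)\big)$,
using $\mathcal{M}\widetilde{Q}^A_n(j) = \widetilde{Q}^A_n(j,b^*)$ and $\mathcal{M}\widetilde{Q}^A_n(i) = \widetilde{Q}^A_n(i,c^*)$. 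The first bracket is bounded by $(1-w+w\gamma)\|\Phi^A_n\|$ by the contraction property of $U_w$ (contraction factor $1-w+w\gamma < 1$ since $0 < w \le w^* \le \tfrac{1}{1-\gamma}$), giving the required $K = 1-w+w\gamma$. The remaining terms form the error $\delta_n$: each is, up to bounded coefficients summing to at most $w\gamma + (1-w) < 1$, a difference of the form $\widetilde{Q}^B_n - \widetilde{Q}^A_n$ evaluated at an argmax of $\widetilde{Q}^A_n$, which by the sandwiching argument in Lemma 2 is bounded in absolute value by $\|V^{BA}_n\| = \|\widetilde{Q}^B_n - \widetilde{Q}^A_n\|$. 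Then I would argue that $\E[\,\|V^{BA}_n\|\,]$, or more carefully the conditional quantity $\E[V^{BA}_n \mid \mathcal{G}_n]$, vanishes by Lemma 2, so that $\delta_n \to 0$ w.p.1; condition (4), the bounded-variance condition, follows exactly as in Theorem 3 using assumption \textbf{(A2)} to bound $\|\widetilde{Q}^A_n\|, \|\widetilde{Q}^B_n\|$ and hence all the reward and Q-value terms appearing in $J^A_n$ by a constant multiple of $1 + \|\Phi^A_n\|$.

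The main obstacle — and the point needing the most care — is the passage from "$\E[V^{BA}_n \mid \mathcal{G}_n] \to 0$ w.p.1" (what Lemma 2 literally gives) to "$\delta_n \to 0$ w.p.1" where $\delta_n$ is built from the \emph{pathwise} quantities $\widetilde{Q}^B_n(j,b^*) - \widetilde{Q}^A_n(j,b^*)$, not their conditional expectations. One clean way around this is to apply Lemma 1 first to the process $V^{BA}_n$ itself (not just its conditional expectation): the recursions (i)–(ii) in Lemma 2 show $V^{BA}_{n+1} = (1-\beta_n)V^{BA}_n + \beta_n J^{A/B}_n$ with $\|\E[J^{A/B}_n \mid \mathcal{G}_n]\| \le (1-w+w\gamma)\|V^{BA}_n\|$ and bounded conditional variance under \textbf{(A2)}, so Lemma 1 yields $V^{BA}_n \to 0$ w.p.1, hence $\|V^{BA}_n\| \to 0$ pathwise; this directly forces $\delta_n \to 0$ w.p.1. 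With conditions (1)–(4) of Lemma 1 all verified for $\Phi^A_n$ (and symmetrically $\Phi^B_n$), we conclude $\widetilde{Q}^A_n \to Q^*_w$ and $\widetilde{Q}^B_n \to Q^*_w$ w.p.1, which is the claim. A secondary subtlety is the standard asynchronous-update bookkeeping — only the visited pair $(i_n,a_n)$ is updated at step $n$ — which is handled exactly as in the Q-learning convergence literature cited in the paper (e.g. by noting every pair is sampled infinitely often so the per-component step-size conditions hold along the subsequence of visits).
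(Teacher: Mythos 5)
Your proposal follows essentially the same route as the paper's proof: apply Lemma \ref{fl} to $\Phi_n=\widetilde{Q}^A_n-Q^*_{w}$, split $\E[J_n\mid\mathcal{G}_n]$ into a $U_{w}$-contraction term with factor $1-w+w\gamma$ plus the $\widetilde{Q}^B_n-\widetilde{Q}^A_n$ cross terms, and send the latter to zero via the $V^{BA}_n$ dynamics of Lemma \ref{lemma2}. Your additional step of applying Lemma \ref{fl} to $V^{BA}_n$ itself to obtain pathwise convergence (rather than convergence of its conditional expectation only) is a mild sharpening of how the paper invokes Lemma \ref{lemma2}, but the argument is otherwise the same.
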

\begin{proof}
	Due to the symmetry of the iterates, it is enough to show that $\widetilde{Q}^A$ $\rightarrow$ $Q^*_{w}$, which is the fixed point of $U_w$. Define $X = S \times A$ and $\Phi_n (i,a) = \widetilde{Q}^{A}_n(i,a)-Q^*_{w}(i,a)$. For the filtration $\mathcal{G}_n=\{\widetilde{Q}^A_0,\widetilde{Q}^B_0,i_j,a_j,\beta_j, \forall j\leq n, n\geq 0\}$. We have
	\begin{align}
	&\Phi_{n+1} (i_n,a_n)\nonumber\\& = (1-\beta_n(i_n,a_n))\Phi_n (i_n,a_n)+\beta_n(i_n,a_n)J_n(i_n,a_n)
	\end{align}
	where $J_n(i_n,a_n) = w (r^{i_{n+1}}_{i_na_n}+\gamma \widetilde{Q}^B_n(i_{n+1},b^*))+ (1-w)( \widetilde{Q}^B_n(i_n,c^*) )-Q^*_{w}(i_n,a_n)$. Without loss of generality, let
\begin{align}
&J_n(i,a) 
	\nonumber\\&= w (r^j_{ia}+\gamma \widetilde{Q}^B_n(j,b^*) )+ (1-w)\widetilde{Q}^B_n(i,c^*) -Q^*_{w}(i,a)\nonumber\\
 &= w (r^j_{ia}+\gamma \widetilde{Q}^A_n(j,b^*))+ (1-w) \widetilde{Q}^A_n(i,c^*)-Q^*_{w}(i,a)\nonumber\\
	&\quad +w\gamma (\widetilde{Q}^B_n(j,b^*)-\widetilde{Q}^A_n(j,b^*))\nonumber\\&\hspace*{1cm}+(1-w) (\widetilde{Q}^B_n(i,c^*)-\widetilde{Q}^A_n(i,c^*)).
	\end{align} 
	Note that $\beta_0$, $\Phi_0$ is $\mathcal{G}_0$ measurable and $\beta_n$, $\Phi_n$, $J_{n-1}$ are $\mathcal{G}_{n}$ measurable for $n\geq1$. Now
	\begin{align}
	&\left|\E[J_n(i,a)|\mathcal{G}_n]\right|\nonumber\\
	&\leq \bigg| U_{w}\widetilde{Q}^A_n(i,a)-U_{w}Q^*_{w}(i,a)\bigg|\nonumber\\&\hspace*{1cm}+ w\gamma \left|\E\left[(\widetilde{Q}^B_n(j,b^*)-\widetilde{Q}^A_n(j,b^*))|\mathcal{G}_n\right]\right|\nonumber\\
	&\hspace{1.4cm}+\left|(1-w)\right|\left| \E[(\widetilde{Q}^B_n(i,c^*)-\widetilde{Q}^A_n(i,c^*)) |\mathcal{G}_n]\right|\nonumber\\
	& \leq (1-w+w\gamma) \, \|\Phi_{n}\| + \delta_n
	\end{align}
	where $\delta_n =w\gamma \left|\E\left[(\widetilde{Q}^B_n(j,b^*)-\widetilde{Q}^A_n(j,b^*))|\mathcal{G}_n\right]\right|+\left|(1-w)\right|\left| \E[(\widetilde{Q}^B_n(i,c^*)-\widetilde{Q}^A_n(i,c^*)) |\mathcal{G}_n]\right|$. Therefore, $\delta_n = w \gamma |\E[V^{BA}_n(j,b^*)]|+|(1-w)| |\E[V^{BA}_n(i,c^*)]|.$ From Lemma \ref{lemma2}, and \textbf{(A2)} it is evident that $\delta_n \rightarrow 0$ as $n \rightarrow \infty$ $w.p.1.$ Once again, condition (4) of Lemma \ref{fl} holds similar to the fourth condition of Theorem \ref{thm3} and hence omitted. Now, all the conditions of Lemma \ref{fl} holds and hence $\Phi_{n} \rightarrow 0$ w.p.1, therefore, $\widetilde{Q}^A_n \rightarrow Q^*_{w}$ w.p.1.
\end{proof}
Before discussing the convergence of the model-free double SOR Q-learning algorithm, a characterization result for the dynamics of $W^{BA}_n(i, a):=Q^B_n(i,a)-Q^A_n(i,a)$ similar to Lemma \ref{lemma2} will be mentioned in the following lemma.
\begin{lemma}\label{lemma3}
	Consider the update rule $Q^A_n$ and $Q^B_n$ of model-free double SOR Q-learning as in Algorithm 1. Then
	$\E[W^{BA}_n(i,a)|\mathcal{G}_n]$ converges to zero w.p.1 as $n \rightarrow \infty$. 	
\end{lemma}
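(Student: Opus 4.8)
The plan is to mirror the argument of Lemma~\ref{lemma2} almost verbatim, with the single new ingredient being that the relaxation parameter is now the iterate $w_n$ rather than a fixed constant $w$. First I would write out the one-step recursion for $W^{BA}_{n+1}(i,a) := Q^B_{n+1}(i,a) - Q^A_{n+1}(i,a)$ by cases: when $Q^A$ is updated at step $n+1$ we get
\begin{equation*}
W^{BA}_{n+1}(i,a) = (1-\beta_n)W^{BA}_n(i,a) + \beta_n\big( Q^B_n(i,a) - w_n(r^j_{ia}+\gamma Q^B_n(j,b^*)) - (1-w_n)Q^B_n(i,c^*)\big),
\end{equation*}
and symmetrically when $Q^B$ is updated. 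Since each branch occurs with probability $\tfrac12$, averaging and setting $\zeta_n = \beta_n/2$ reduces the dynamics to the $\Phi_{n+1} = (1-\zeta_n)\Phi_n + \zeta_n J_n$ form required by Lemma~\ref{fl}, with the noise term
\begin{equation*}
J^{BA}_n(i,a) = \E\big[ w_n\gamma(Q^A_n(j,d^*) - Q^B_n(j,b^*)) + (1-w_n)(Q^A_n(i,e^*) - Q^B_n(i,c^*)) \mid \mathcal{G}_n\big].
\end{equation*}

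Next I would verify the hypotheses of Lemma~\ref{fl}. The step-size condition is inherited from $\beta_n$ (halving does not affect $\sum\zeta_n=\infty$, $\sum\zeta_n^2<\infty$); finiteness of $X = S\times A$ and measurability are immediate; and condition~(4) on the conditional variance follows from the law of total expectation exactly as in Lemma~\ref{lemma2}, using \textbf{(A1)}-type boundedness of the iterates. The contraction estimate~(3) is the crux: I would bound
\begin{equation*}
|\E[J^{BA}_n(i,a)\mid\mathcal{G}_n]| \le w_n\gamma \sum_{j\neq i} p^j_{ia}|Q^A_n(j,d^*)-Q^B_n(j,b^*)| + (w_n\gamma p^i_{ia} + |1-w_n|)\,|Q^A_n(i,e^*)-Q^B_n(i,c^*)|,
\end{equation*}
then use the $\arg\max$ swap trick from Lemma~\ref{lemma2} — namely $|Q^A_n(j,d^*)-Q^B_n(j,b^*)| \le \|W^{BA}_n\|$ regardless of the sign, since $d^*$ maximizes $Q^A_n(j,\cdot)$ and $b^*$ maximizes $Q^B_n(j,\cdot)$ — to collapse everything to $(w_n\gamma + |1-w_n|)\|W^{BA}_n\|$.

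The main obstacle is that $w_n$ is no longer bounded above by $w^*$ at finite $n$; it merely converges to $w^*\in[1,\tfrac{1}{1-\gamma}]$. When $w_n\ge 1$ we have $|1-w_n| = w_n-1$, so the factor becomes $w_n\gamma + w_n - 1 = 1 - w_n(1-\gamma)$, which is $<1$ precisely when $w_n > 0$, but it need not lie in $[0,1)$ uniformly unless $w_n(1-\gamma)\le 1$, i.e. $w_n\le\tfrac{1}{1-\gamma}$. I would handle this by invoking Theorem~\ref{thm1}: since $w_n\to w^*$ and $w^*(1-\gamma)\le 1$ with $w^*\ge 1$, for every $\varepsilon>0$ there is $N$ beyond which $w_n\gamma + |1-w_n| \le 1 - w^*(1-\gamma) + \varepsilon =: K < 1$. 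Thus I would split $J^{BA}_n$ as a term contracting with factor $K$ plus a perturbation $\delta_n$ absorbing the transient $n<N$ discrepancy and the $O(|w_n-w^*|)\|W^{BA}_n\|$ slack; under \textbf{(A1)} (boundedness of $Q^A_n,Q^B_n$) this $\delta_n\to 0$ w.p.1, so condition~(3) of Lemma~\ref{fl} holds with $K\in[0,1)$. Applying Lemma~\ref{fl} then yields $\E[W^{BA}_n(i,a)\mid\mathcal{G}_n]\to 0$ w.p.1, completing the proof.
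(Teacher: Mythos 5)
Your overall architecture is the right one, and since the paper itself only says the proof ``proceeds on similar lines'' as Lemma~\ref{lemma2}, you are usefully filling in omitted details: the case-split recursion, the reduction with $\zeta_n=\beta_n/2$, the appeal to Lemma~\ref{fl}, and the idea of absorbing the $w_n\to w^*$ transient into a vanishing $\delta_n$ are all correct in outline. However, your contraction estimate contains a genuine error. After the triangle inequality you collapse the bound to $(w_n\gamma+|1-w_n|)\,\|W^{BA}_n\|$ and then assert that for $w_n\geq 1$ one has $w_n\gamma+w_n-1=1-w_n(1-\gamma)$. This identity is false: $w_n\gamma+w_n-1=w_n(1+\gamma)-1$, which differs from $1-w_n(1-\gamma)$ by $2(w_n-1)$. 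The quantity $w_n(1+\gamma)-1$ converges to $w^*(1+\gamma)-1$, which exceeds $1$ whenever $w^*>2/(1+\gamma)$; for instance, in the single-state case $w^*=\tfrac{1}{1-\gamma}$ it equals $\tfrac{2\gamma}{1-\gamma}$. So the constant $K$ you extract is not in $[0,1)$, and condition (3) of Lemma~\ref{fl} fails as you have set it up.

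What must not be discarded is exactly the cancellation in inequality $(i)$ of Lemma~\ref{lemma2}: because the next state $j$ equals $i$ with probability $p^i_{ia}$, in which case $b^*=c^*$ and $d^*=e^*$, the \emph{signed} coefficient of the state-$i$ difference inside the conditional expectation is $w\gamma p^i_{ia}+(1-w)$, which is nonnegative precisely when $w\leq \tfrac{1}{1-\gamma p^i_{ia}}$, and the coefficients then sum to $w\gamma+1-w=1-w(1-\gamma)<1$. Replacing $(1-w_n)$ by $|1-w_n|$ throws this cancellation away and is what produces the wrong factor. The repair for the time-varying $w_n$ is the one already used in the proof of Theorem~\ref{thm3}: add and subtract the $w^*$-version of $J^{BA}_n$, bound the $w^*$-term by $(1-w^*+w^*\gamma)\|W^{BA}_n\|$ exactly as in Lemma~\ref{lemma2} (valid since $w^*\leq \tfrac{1}{1-\gamma p^i_{ia}}$ for every $(i,a)$ by definition of $w^*$), and place the remainder, which is bounded by a constant multiple of $|w_n-w^*|\,\|W^{BA}_n\|\leq |w_n-w^*|\cdot 2B$ under \textbf{(A3)} (note: the relevant boundedness assumption for Algorithm 1 is \textbf{(A3)}, not \textbf{(A1)}), into $\delta_n\to 0$ via Theorem~\ref{thm1}. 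With that change your argument closes.
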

\begin{proof}
	The proof proceeds on similar lines of Lemma \ref{lemma2} and hence omitted.
\end{proof}

At last, the convergence of model-free double SOR Q-learning is outlined under the following assumption: \textbf{(\textbf{A3}):} $\| Q^A_n\| \leq B < \infty$ and $\| Q^B_n\| \leq B < \infty$, $\forall n \geq 0$.
\begin{theorem}
	Suppose \textbf{(A3)} holds and MDP be defined as in Section II. Let every state-action pair be sampled indefinitely. Then the model-free double SOR Q-learning with update rule as in Algorithm 1, converges w.p.1 to the fixed point of $U_{w^*}$ as long as $\sum_{n}\beta_n(i,a)=\infty,\; \sum_{n}\beta^2_n(i,a)<\infty,$ and $0\leq \beta_n(i,a) \leq 1$.
\end{theorem}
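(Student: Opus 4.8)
The plan is to run exactly the argument of Theorem \ref{thm3}, but now fold in the table-decoupling estimate of Lemma \ref{lemma3}; the whole thing is once more an application of Lemma \ref{fl}. By the symmetry between the $Q^A$- and $Q^B$-updates in Algorithm 1 it suffices to prove $Q^A_n \to Q^*_{w^*}$, where $Q^*_{w^*}$ is the fixed point of $U_{w^*}$. I would set $X = S\times A$, $\Phi_n(i,a) = Q^A_n(i,a) - Q^*_{w^*}(i,a)$ and $\mathcal{G}_n = \{Q^A_0, Q^B_0, i_j, a_j, \beta_j, w_j : j \le n\}$, and work along the iterations at which $Q^A$ is updated (infinitely many, since every pair is visited indefinitely and each table is selected with probability $1/2$), so that
\begin{align*}
\Phi_{n+1}(i_n,a_n) &= (1-\beta_n(i_n,a_n))\Phi_n(i_n,a_n) + \beta_n(i_n,a_n)\,J_n(i_n,a_n),\\
J_n(i,a) &= w_n\big(r^j_{ia}+\gamma Q^B_n(j,b^*)\big) + (1-w_n)Q^B_n(i,c^*) - Q^*_{w^*}(i,a),
\end{align*}
with $b^* = \arg\max_b Q^A_n(j,b)$ and $c^* = \arg\max_c Q^A_n(i,c)$.

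The core of the proof is verifying condition (3) of Lemma \ref{fl} with $K := 1-w^*+w^*\gamma$, which is the contraction factor of $U_{w^*}$ and lies in $[0,1)$. To this end I would decompose $J_n$ by first writing $Q^B_n = Q^A_n + W^{BA}_n$ — so that, since $b^*,c^*$ are $\arg\max$'s of $Q^A_n$, one gets $Q^A_n(j,b^*)=\mathcal{M}Q^A_n(j)$ and $Q^A_n(i,c^*)=\mathcal{M}Q^A_n(i)$ — and then replacing $w_n$ by $w^*$:
\begin{align*}
J_n(i,a) &= \big[w^*(r^j_{ia}+\gamma\mathcal{M}Q^A_n(j)) + (1-w^*)\mathcal{M}Q^A_n(i) - Q^*_{w^*}(i,a)\big]\\
&\quad + (w_n-w^*)\big(r^j_{ia}+\gamma\mathcal{M}Q^A_n(j) - \mathcal{M}Q^A_n(i)\big)\\
&\quad + w_n\gamma\,W^{BA}_n(j,b^*) + (1-w_n)W^{BA}_n(i,c^*).
\end{align*}
Taking $\E[\cdot\,|\,\mathcal{G}_n]$, the first bracket is $U_{w^*}Q^A_n(i,a) - U_{w^*}Q^*_{w^*}(i,a)$, hence bounded in absolute value by $(1-w^*+w^*\gamma)\|\Phi_n\|$; the second term is bounded by $|w_n-w^*|(R_{\max}+2\|Q^A_n\|)$, which tends to $0$ w.p.1 by Theorem \ref{thm1} and \textbf{(A3)}; and, exactly as in the proof of Theorem \ref{thm4}, the third term is bounded by $w_n\gamma\,|\E[W^{BA}_n(j,b^*)\,|\,\mathcal{G}_n]| + |1-w_n|\,|\E[W^{BA}_n(i,c^*)\,|\,\mathcal{G}_n]|$, which tends to $0$ w.p.1 by Lemma \ref{lemma3} and \textbf{(A3)} (the coefficients stay bounded since $0<w_n\le \tfrac{1}{1-\gamma}$). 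Collecting the three estimates gives $|\E[J_n(i,a)|\mathcal{G}_n]| \le (1-w^*+w^*\gamma)\|\Phi_n\| + \delta_n$ with $\delta_n\to 0$ w.p.1, which is condition (3). Condition (4), the variance bound $Var[J_n(i,a)|\mathcal{G}_n]\le C(1+\|\Phi_n\|)^2$, follows line by line as in the proof of Theorem \ref{thm3} using the boundedness of $w_n$, $(1-w_n)$ and \textbf{(A3)}; conditions (1)--(2) are immediate. Lemma \ref{fl} then yields $\Phi_n\to 0$, i.e.\ $Q^A_n \to Q^*_{w^*}$ w.p.1, and by symmetry $Q^B_n\to Q^*_{w^*}$ as well.

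The main obstacle I anticipate is precisely the bookkeeping inside condition (3): one must absorb two distinct vanishing perturbations into a single $\delta_n$ — the parameter drift $w_n-w^*\to 0$ coming from Theorem \ref{thm1}, and the inter-table bias $\E[W^{BA}_n\,|\,\mathcal{G}_n]\to 0$ coming from Lemma \ref{lemma3} — while keeping the multiplicative constant pinned at $1-w^*+w^*\gamma$. This is exactly why the $\arg\max$ indices $b^*,c^*$ must be formed from $Q^A_n$ (as in Algorithm 1): that is what makes the substitution $Q^B_n = Q^A_n + W^{BA}_n$ collapse the leading term to $U_{w^*}Q^A_n$ and what lets Lemma \ref{lemma3} control the residual $W^{BA}_n$-terms. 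The equal-probability update is handled, as in the double Q-learning analysis, by running the recursion along the subsequence of $Q^A$-updates, whose step-sizes still satisfy $\sum_n\beta_n(i,a)=\infty$ and $\sum_n\beta_n^2(i,a)<\infty$ because every state-action pair is sampled indefinitely. Everything beyond this is a routine transcription of the two already-proved theorems and so can be stated briefly.
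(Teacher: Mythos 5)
Your proposal is correct and follows exactly the route the paper intends: the paper omits this proof, stating only that it "proceeds on similar lines as that of Theorem \ref{thm3} and Theorem \ref{thm4}," and your argument is precisely that combination — the $U_{w^*}$-contraction term plus the vanishing perturbation $|w_n-w^*|(R_{\max}+2\|Q^A_n\|)$ from Theorem \ref{thm1} as in Theorem \ref{thm3}, plus the vanishing inter-table bias controlled by Lemma \ref{lemma3} as in Theorem \ref{thm4}, all fed into Lemma \ref{fl}. Your explicit three-term decomposition of $J_n$ and the remark on why $b^*,c^*$ must be $\arg\max$'s of $Q^A_n$ are faithful elaborations of what the paper leaves implicit.
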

\begin{proof}
	The proof proceeds on similar lines as that of Theorem \ref{thm3} and Theorem \ref{thm4} and hence omitted.
\end{proof}

Now, we conclude this section with a theoretical and experimental analysis of the overestimation bias of the SOR Q-learning algorithm in comparison to the proposed algorithm.

\subsection{Theoretical Analysis of Over-estimation Bias in SORQL}
In Q-learning at every $n^{th}$ iteration the problem is to evaluate $\max_a \E [Q_n(s_{n+1},a)]$, i.e., for set of random variables $\{X_i\}^{d}_{i=1}$, we wish to compute $\max_i \E [X_i]$ \cite{hasselt2010double}. Let $\{\mu_i\}^{d}_{i=1}$ be the unbiased estimators of $\{X_i\}^{d}_{i=1}$. The estimator is unbiased in the sense that $\E[X_i]=\E[\mu_i], \forall i$. Therefore, Bias $= \E[\max_i \mu_i] - \max_i \E[\mu_i]$, but from Jensen's inequality, we have $\E[\max_i \mu_i] \geq \max_i \E[\mu_i]$, which makes Q-learning to over-estimate. Similarly, the estimator of the SORQL algorithm, which is $w\max_i \mu_i + (1-w) \max_j \zeta_j$, where $\zeta_j$ are unbiased estimators with respect to the set of random variables $\{Y_j\}^{d}_{j=1}$ corresponding to the actions in the previous state. Then the bias is given by \begin{equation}\label{bias}
\text{Bias} = \E[w\max_i \mu_i + (1-w) \max_j \zeta_j]-\max_i \E[\mu_i] .
\end{equation}
For $w>1$ in SORQL, we have $p^i_{ia}>0, \forall (i,a)\in S\times A$. Hence, with probability $p^i_{ia}$, $X_i=Y_j$. Consequently, the above equation reduces to
\begin{align}
\text{Bias} &= \E[w \max_i \mu_i + (1-w) \max_i \mu_i] - \max_i \E[\mu_i] \\
&= \E[\max_i \mu_i] - \max_i \E[\mu_i] \geq 0.
\end{align}
Therefore, with probability $p^i_{ia}$, the SORQL algorithm overestimates the true value. Similarly, we can see that the MF-SORQL will also suffer from over-estimation bias. To this end, the techniques developed in double Q-learning prove useful. More precisely, in double Q-learning, for a set of random variables $\{X_i\}^{d}_{i=1}$, to evaluate $\max_i \E [X_i]$, two sets of unbiased estimators, namely $\{\mu^A_i\}^{d}_{i=1}$, and $\{\mu^B_i\}^{d}_{i=1}$ are used. Further, it is shown that for $a^*$ such that $\mu^A_{a^*}= \max_i \mu^A_i$, $ \E[\mu^B_{a^*}] \leq \max_i \E [X_i]$ (Lemma 1, \cite{hasselt2010double}). Therefore, Bias $= \E[ \mu^B_{a^*}] - \max_i \E[\mu^B_i]\leq 0$, consequently, double Q-learning underestimates the true value. Note that underestimation is not a complete solution to the over-estimation problem of Q-learning. However, this modification to Q-learning has proven to be more useful in many situations \cite{van2016deep}. Now, for the proposed DSORQL, let $\{\mu^A_i\}^{d}_{i=1}$, $\{\mu^B_i\}^{d}_{i=1}$, and $\{\zeta^A_j\}^{d}_{j=1}$, $\{\zeta^B_j\}^{d}_{j=1}$ be the unbiased estimators of $\{X_i\}^{d}_{i=1}$, and  $\{Y_j\}^{d}_{j=1}$, respectively. Therefore, the bias is given by
\begin{equation}
\text{Bias} = \E [w \mu^B_{a^*} + (1-w) \zeta^B_{b^*}]-\max_i \E[\mu^B_i]
\end{equation}
  where $a^*$ and $b^*$ is such that $\mu^A_{a^*}= \max_i \mu^A_i$ and $\zeta^A_{b^*}= \max_j \zeta^A_j$ respectively. 
Similar to the earlier argument, for $w>1$ in DSORQL, we have $p^i_{ia}>0, \forall (i,a)\in S\times A$. Hence, with probability $p^i_{ia}$, $X_i=Y_j$. Consequently, the above equation reduces to
\begin{align}
\text{Bias} &= \E[w \mu^B_{a^*} + (1-w)\mu^B_{a^*}]- \max_i \E[\mu^B_i] \\
&=\E[\mu^B_{a^*}]- \max_i\E[\mu^B_i] \leq 0 \quad (\text{From Lemma 1, \cite{hasselt2010double}}). 
\end{align}

Hence, DSORQL underestimates the true action value. Similarly, one can observe that model-free DSORQL also underestimates the true action value.  
\subsection{Experimental Analysis of Over-estimation Bias in SORQL using Roulette \cite{hasselt2010double}}
In roulette, a player can choose from $170$ different betting actions, such as betting on a specific number or on the color black or red. Each bet has a payout designed so that the expected return is approximately \$0.947 per dollar wagered, resulting in an expected loss of \$-0.053 per play. This is modeled as an MDP with one state and $171$ actions, where one action is not placing a bet, ending the episode. If the player chooses not to bet, the payout is \$0. Assuming the player bets \$1 each time, the optimal strategy is to refrain from gambling, resulting in an expected profit of \$0. In this example, the choice of successive relaxation factor $w$ is known as we have only one state. More specifically, $0 < w \leq w^* = \frac{1}{1-\gamma\min_{i,a}p^i_{ia}}=\frac{1}{1-\gamma}$. Therefore, both MF-DSORQL and MF-SORQL are the same as that of DSORQL and SORQL, respectively.
\vspace{-0.3cm}
 \begin{figure}[ht]
	\centering
	\includegraphics[width=0.4\textwidth]{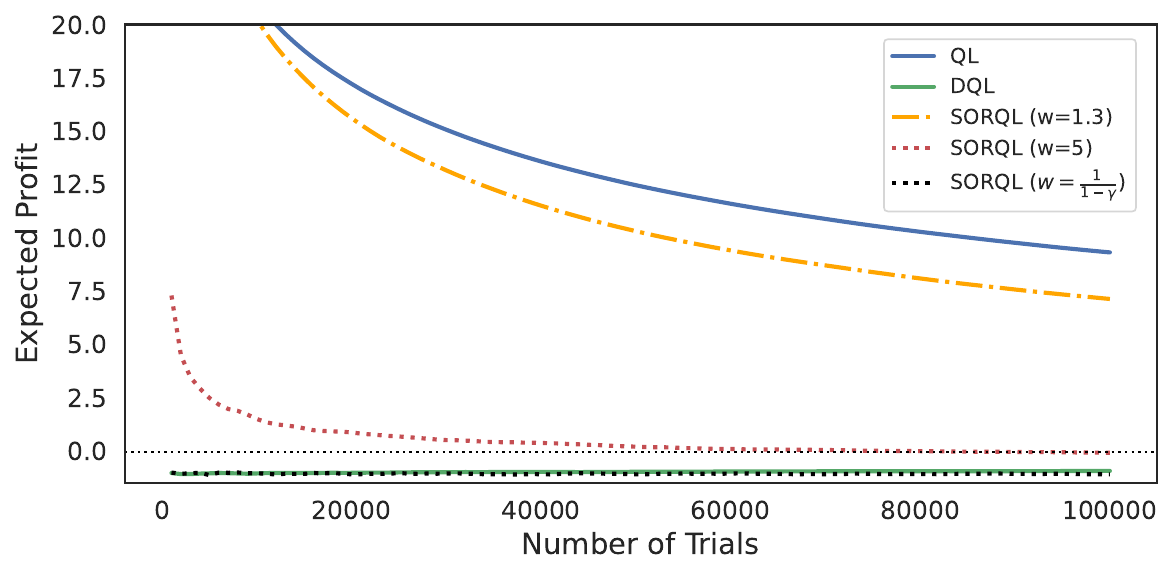}
	\vspace{-0.4cm}
	\caption{SORQL for different SOR parameters.}
	\label{fig:image1}
\end{figure}
\vspace{-0.5cm}
\begin{figure}[ht]
	\centering
	\includegraphics[width=0.4\textwidth]{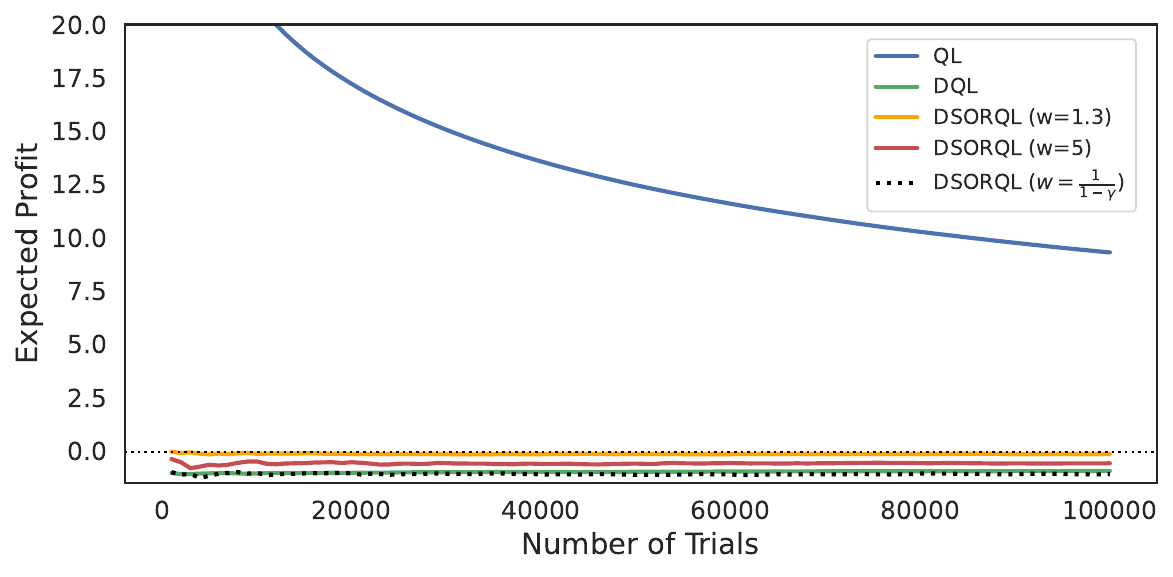}
	\vspace{-0.4cm}
	\caption{DSORQL for different SOR parameters.}
	\label{fig:image2}
\end{figure}\\

The experiment is conducted with $\gamma=0.95$, and the polynomial step-size mentioned in \cite{hasselt2010double}.  Fig. \ref{fig:image1} and Fig. \ref{fig:image2} illustrate the performance of the SORQL and DSORQL algorithms, respectively, for different choices of SOR parameters, and compare them with QL \cite{watkins1992q} and DQL \cite{hasselt2010double}. Each experiment involves $100000$ trials, and the graphs are obtained by averaging the results over $10$ independent runs. Following \cite{hasselt2010double} all algorithms are updated synchronously and trained using epsilon-greedy policy.
\noindent
From Fig. \ref{fig:image2}, it is evident that the performance of DSORQL remains consistent with that of DQL for various values of $w$. In contrast, Fig. \ref{fig:image1}  clearly demonstrates the overestimation bias in SORQL algorithm.

\section{Numerical Experiments} 
In this section, the tabular and deep RL versions of the proposed algorithms are compared on benchmark examples. {The detailed experimental setup is in the supplementary material. }The code for all the numerical experiments in this manuscript is publicly available and can be accessed in the author's GitHub repository \cite{shreyas}.

\subsection{Tabular Version}
This subsection provides numerical comparisons between Q-learning (QL) \cite{watkins1992q}, double Q-learning (DQL) \cite{hasselt2010double}, triple-average policy gradient (TAPG) \cite{wu2020reducing},  model-free SOR Q-learning (MF-SORQL), and model-free double SOR Q-learning (MF-DSORQL). More specifically, we compare the above-mentioned algorithms on benchmark examples  such as  CartPole \cite{brockman2016openai}, LunarLander \cite{brockman2016openai}, and multi-armed bandit example discussed in \cite{8695133}, and \cite{tan2024q}. The choice of SOR parameter for SORQL and DSORQL are kept same, and the parameter $\beta$ for TAPG is set to 0.95 as discussed in \cite{wu2020reducing}.

\subsubsection{CartPole Environment} 
In the CartPole-v0 environment, the goal is to balance a pole on a moving cart by applying forces in the left and right direction on the cart. The agent receives a reward of $+1$ for each time step the pole remains upright, and the episode terminates if the pole angle is not in the range of $\pm 12^{\circ}$ or the cart range is not within $(-2.4,2.4)$. In this experiment,  the discount factor is 0.999, and the step size is $\frac{40}{n+100}$. We use an epsilon-greedy policy to train the algorithms. The continuous CartPole environment is discretized to $72$ states. The standard threshold for a successful CartPole-v0 agent is to achieve an average reward of $195$ over $50$ episodes. Table \ref{tab:my-table}, provides the minimum number of episodes required to achieve an average reward of $195$ over $50$ episodes. From Table \ref{tab:my-table}, it is evident that the proposed DSORQL algorithm, with the SOR parameter $w=1.1$, achieves the target much quicker than the rest of the algorithms.

\subsubsection{ LunarLander Environment}

The LunarLander-v2 environment involves controlling a spacecraft to land safely on a lunar surface. The agent is rewarded for maintaining the lander within the landing pad, controlling its orientation, and landing with minimal velocity. This environment serves as a test for algorithms to demonstrate the ability to make decisions at varying complexity levels. The threshold for LunarLander-v2 is to achieve a reward of $200$. The following table, i.e., Table \ref{tab:my-table}, provides the minimum number of episodes required to achieve the threshold averaged over five independent experiments. The step-size for all the algorithms are same with the discount factor is set to $0.99$. Once again, from Table \ref{tab:my-table} it is clear that the proposed algorithm with the SOR parameter $w=1.3$ learns the optimal policy much quicker than the rest of the algorithms. 
\begin{table}[ht]
	\centering
	\resizebox{0.48\textwidth}{!}{%
		\begin{tabular}{lccccc}
			\toprule
			\textbf{Algorithm} & \textbf{QL \cite{watkins1992q}} & \textbf{SORQL \cite{kamanchi2019successive}} & \textbf{TAPG \cite{wu2020reducing}} & \textbf{DQL \cite{hasselt2010double}} & \textbf{DSORQL} \\
			\midrule
			\textbf{CartPole}      & 498  & 648  & 1000  & 1000  & \textbf{298} \\
			\textbf{LunarLander}   & 1065 & 1128 & 1073  & 1085  & \textbf{905} \\
			\bottomrule
		\end{tabular}
	}
	\vspace{0.1cm} 
	\caption{Number of episodes required to complete.}
	\label{tab:my-table}
\end{table}

\subsubsection{Multi-Armed Bandit Example \cite{8695133, tan2024q}}

This example considers a scenario with a single state and multiple action options. Specifically, there are thirty-nine available actions, i.e., $|A| = 39$, and the thirty-ninth action leads to the end of the episode with no reward (See, Fig. \ref{key3}). The other thirty-eight actions provide a reward sampled from a Gaussian distribution with a mean of $-0.0526$ and a variance of one. In this example, the discount factor, $\gamma$, is set to $0.99$, and the step size for all algorithms is defined as $100/(n+100)$. All state-action values are initialized to zero.  Training is conducted over $50000$ episodes, averaging the results across $10$ independent runs. Similar to \cite{8695133}, all algorithms are updated asynchronously and trained using an exploratory policy.

 The following table, i.e., Table \ref{tab:approx_q_values}, presents the performance comparison of the proposed DSORQL algorithm with QL, DQL, SORQL, and TAPG. Note that the optimal Q-value in this example is zero.
 \begin{table}[ht]
 	\centering
 	\resizebox{0.48\textwidth}{!}{%
 		\begin{tabular}{@{}lcccccc@{}}
 			\toprule
 			\textbf{Algorithm} & \textbf{QL \cite{watkins1992q}} & \textbf{SORQL \cite{kamanchi2019successive}} & \textbf{TAPG \cite{wu2020reducing}} & \textbf{DQL \cite{hasselt2010double}} & \textbf{DSORQL} \\ \midrule
 			\textbf{$\max_a Q(a)$} & 8.140 & 1.443 & -5.27 & -5.61 & \textbf{0.078}  \\ 
 			\bottomrule
 		\end{tabular}%
 	}
 	\vspace{0.1cm}
 	\caption{Approximate $Q^*(a)$ after 50,000 episodes.}
 	\label{tab:approx_q_values}
 \end{table}
 
From Table \ref{tab:approx_q_values}, it is easy to see that DSORQL algorithm converges quicker to the optimal value in comparison to  the rest of the algorithms. The choice of SOR parameter in this example for both SORQL and DSORQL is $w=1/(1-\gamma)$. Also, during our numerical experiments, we observed that this behavior of DSORQL remained consistent even when the cardinality of the action space was increased or decreased, as well as when the action distribution is modified to $N(-0.1,1)$, similar to \cite{tan2024q}. This highlights the robustness of the proposed algorithm.

\begin{figure}[ht]
			\includegraphics[scale=0.17]{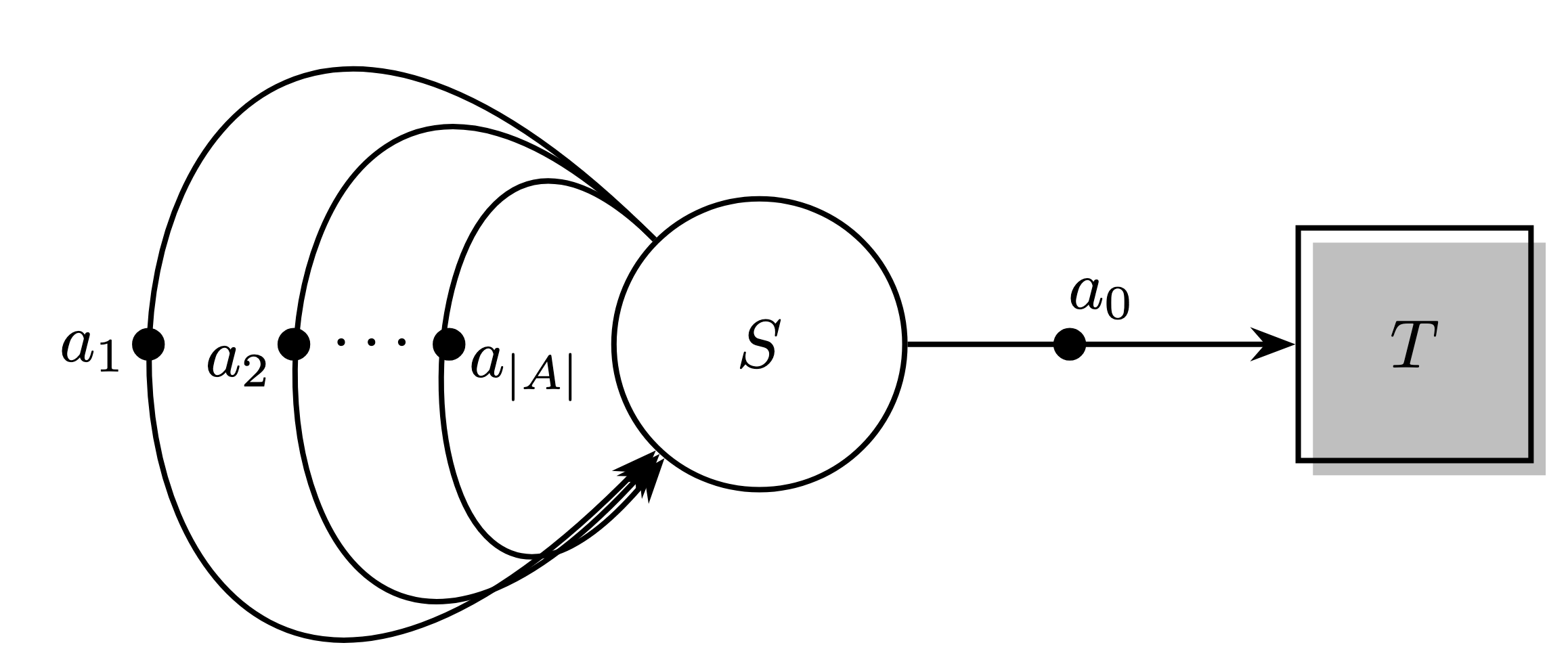}
				\centering
				\caption{Multi-Armed Bandit Example \cite{8695133}.}
				\label{key3}
\end{figure}
\subsection{Deep RL Version}
This subsection compares the proposed double  SOR Q-network (DSORDQN) with the deep RL version of SOR Q-learning (SORDQN) \cite{9206598}, the deep Q-network (DQN) \cite{mnih2013playing}, deep double Q-network (DDQN) \cite{van2016deep}, and Rainbow \cite{hessel2018rainbow}. The performance is evaluated using the maximization bias example discussed in \cite{weng2020mean},  and six Atari games.  We have used MushroomRL library \cite{d2021mushroomrl} to test the performance on the Atari games. The choice of SOR parameter $w$ in SORDQN and hyper-parameters in Rainbow are as discussed in \cite{9206598} and \cite{hessel2018rainbow}, respectively.  
\subsubsection{Maximization Bias Example \cite{weng2020mean}} This example is similar to the classic scenario used to illustrate bias behavior in tabular QL and DQL, which is discussed in Chapter 6 of \cite{MR3889951}. However, the current implementation is adapted from \cite{weng2020mean}. Specifically, the maximization bias example in this manuscript involves $\{0,1,2,..., M\}$ states with $M=10^9$ and two terminal state. Therefore, applying tabular methods is infeasible. Consequently, we apply the proposed DSORDQN algorithm and compare its performance with DQN, SORDQN, and DDQN. The agent always begins in state zero. The discount factor is $\gamma=0.999$. Each state allows two actions: moving left or right (See, Fig. \ref{fig:enter-label}). If the agent moves right from state zero, it receives zero reward, and the game ends. If the agent moves left, it transitions to one of the states in $\{1,2,...,M\}$ with equal probability and receives zero reward. In these new states, moving right returns the agent to state zero, while moving left ends the game, with rewards drawn from a Gaussian distribution $N(-0.1,1)$. 

Note that the choice of parameters in this example is the same as that of \cite{weng2020mean} and discussed in detail in the supplementary material. The policy used is epsilon-greedy with $\epsilon = 0.1$, with training conducted over $400$ episodes. We plot the probability of selecting the left action from state zero at the end of each episode, averaging over $1000$ runs. A higher probability of choosing left suggests that the algorithm is following a sub-optimal strategy, as consistently choosing right yields the highest mean reward. In Fig. \ref{fig:your_label}, one can observe that the policy obtained from the proposed algorithm is better than DQN, SORDQN, and DDQN. 
\begin{figure}[ht]
	\centering
	\begin{tikzpicture}[node distance=2cm, auto, thick, scale=0.7, transform shape]  
	\node[draw, rectangle, fill=gray!30, text=black, drop shadow] (start) at (0,0) {\textbf{T}};
	\node[draw, circle,  drop shadow, fill=white, label=above:{}] (1) at (2.4,2) {1};
	\node[draw, circle,  drop shadow, fill=white] (2) at (2.4,1) {2};
	\node[draw, circle, drop shadow, fill=white] (m) at (2.4,-1.5) {$M$};
	\node[draw, circle,  drop shadow, fill=white] (0) at (5,0) {0};
	\node[draw, rectangle, fill=gray!30, text=black, drop shadow] (end) at (8,0) {\textbf{T}};
	
	\draw[->, line width=1.5pt] (1) -- (start);
	\draw[->, line width=1.5pt] (2) -- (start);
	\draw[->, line width=1.5pt] (m) -- (start);
	\draw[<->, line width=1.5pt] (1) -- (0)  node[midway, right]{\textbf{Reward = 0}};
	\draw[<->, line width=1.5pt] (2) -- (0);
	\draw[<->, line width=1.5pt] (m) -- (0);
	\draw[->, line width=1.5pt] (0) -- (end) node[midway, below]{\textbf{Reward = 0}};
	
	\node at (2.4,-0.1) {$\vdots$};
	
	\filldraw (3.5,1.15) circle (2pt);
	\filldraw (3.5,0.55) circle (2pt);
	\filldraw (3.5,-0.85) circle (2pt);
	\filldraw (6.5,0) circle (2pt);
	
	\filldraw (1.2,1) circle (2pt);
	\filldraw (1.2,0.5) circle (2pt);
	\filldraw (1.2,-0.8) circle (2pt);
	
	\end{tikzpicture}
	
	\caption{Any action from the states $1$ to $M$ gives reward sampled from $N(-0.1,1)$, where $M = 10^9$.}
	\label{fig:enter-label}
\end{figure}

\subsubsection{Atari Games}

We evaluate the performance of DRL algorithms on six Atari 2600 games—Breakout, Qbert, Seaquest, Freeway, Asterix, and Space Invaders. The complete parameter set-up is available in the supplementary material. We report the mean reward over 10 epochs with standard deviation across five independent seeds, as summarized in Table \ref{tab:atari-comparison}. 

	The results in Table \ref{tab:atari-comparison} demonstrate that the proposed DSORDQN performs better compared to all baseline methods (DQN, SORDQN, DDQN) and Rainbow in four out of the five evaluated environments. Specifically, it achieves the highest average rewards in Seaquest (304.08 $\pm$ 5.64), Breakout (61.77 $\pm$ 2.75), Space Invaders (316.97 $\pm$ 15.01), Asterix (560.33 $\pm$ 15.03), and Freeway (25.04 $\pm$ 0.70). It also outperforms Rainbow by 28\% in Seaquest, 219\% in Breakout, 24\% in Space Invaders, 7\% in Asterix, and 67\% in Freeway. In addition, it demonstrates lower variance across multiple runs, indicating more reliable learning outcomes.

	While Rainbow achieves the highest reward in Qbert, it is a multi-step algorithm requiring more training samples, has several hyperparameters to tune, and our numerical experiments indicate that it takes longer to train due to its increased complexity. In contrast, DSORDQN provides a more efficient learning framework, yielding consistently better results across rest of the environments while maintaining computational feasibility.
	
	\begin{figure}[ht]
		\centering
		\includegraphics[width=8cm,height=4cm]{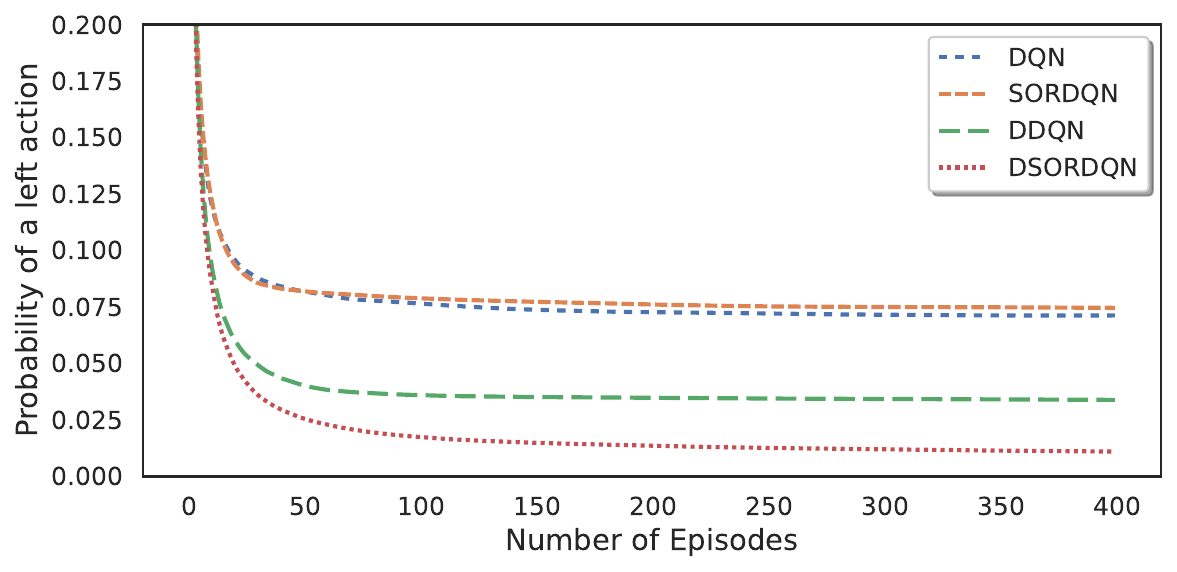}
		\caption{The performance of algorithms on maximization bias example.}
		\label{fig:your_label}
	\end{figure}
	
We conclude this manuscript with a discussion on the ablation studies.
\subsection{Ablation Studies}
In Fig.~\ref{fig:image1} and Fig.~\ref{fig:image2}, we present a comparison between the SORQL and DSORQL algorithms across different values of the SOR parameter ($w$) on the roulette example. Specifically, we test $w=1.3$, $w=5$, and $w=\frac{1}{1-\gamma}$. The plots demonstrate that, unlike SORQL (Fig.~\ref{fig:image1}), DSORQL consistently performs well across these varying choices of $w$ (Fig.~\ref{fig:image2}). This result suggests that DSORQL is more robust to changes in $w$ compared to SORQL.

In the CartPole environment, we found that $w=1.1$ outperforms other choices of $w$ (e.g., $w=1.2$, $w=1.3$, and $w=1/(1-\gamma)$), which shows the importance of selecting the right $w$ for optimal performance. A similar trend is observed in the LunarLander domain, where $w=1.3$ delivers the best performance. For the multi-armed bandit task, we observe that performance improves as $w$ approaches $1/(1-\gamma)$.

In the maximization bias example, similar to the multi-armed bandit task, we again see performance improvements as $w$ is varied from $1.1$ to $1/(1-\gamma)$. 
For Atari games, experiments suggest that $w=1.2$ and $w=1.3$ yield promising performance on the discussed Atari tasks. Note that $w=1$ corresponds to DQL in the tabular setting and DDQN in the function approximation setting. From theory, it can be observed that faster convergence is achievable for $w>1$ \cite{9206598}, and hence, we choose the SOR parameter accordingly to optimize convergence. In our numerical experiments, we obtain a suitable SOR parameter by testing various values of $w$. Alternatively, hyperparameter tuning methods, such as those described in \cite{feurer2019hyperparameter}, can be applied to automate and refine this process.

\section{Conclusion}
Motivated by the recent success of SOR methods in Q-learning and the issues concerning its over-estimation bias, this manuscript proposes a double successive over-relaxation Q-learning. Both theoretically and empirically, the proposed algorithm demonstrates a lower bias compared to SOR Q-learning. In the tabular setting, convergence is proven under suitable assumptions using SA techniques. A deep RL version of the method is also presented. Experiments on standard examples, showcase the algorithm's performance. It is worth mentioning that, similar to the boundedness of iterates observed in Q-learning \cite{gosavi2006boundedness}, investigating the boundedness of the proposed algorithms in the tabular setting presents an interesting theoretical question for future research. 

\begin{table*}[ht]
	\centering
	\resizebox{0.8\textwidth}{!}{%
		\begin{tabular}{lccccc}
			\toprule
			\textbf{Atari Games} & \textbf{DQN \cite{mnih2013playing}} & \textbf{SORDQN \cite{9206598}} & \textbf{Rainbow \cite{hessel2018rainbow}} & \textbf{DDQN \cite{van2016deep}} & \textbf{DSORDQN} \\ 
			\midrule
			\textbf{Seaquest}       & 143.13 (37.85) & 202.18 (19.77)  & 237.05 (12.20)           & 208.30 (60.02) & \textbf{304.08 (5.64)}  \\ 
			\textbf{Breakout}       & 55.34 (3.70)   & 4.27 (2.43)     & 19.36 (1.10)             & 54.02 (4.60)   & \textbf{61.77 (2.75)}   \\ 
			\textbf{Space Invaders} & 249.58 (11.62) & 251.71 (14.69)  & 254.72 (8.70)            & 247.99 (31.03) & \textbf{316.97 (15.01)} \\ 
			\textbf{Qbert}          & 552.78 (40.38) & 267.18 (39.23)  & \textbf{2957.80 (86.33)} & 668.86 (58.26) & 830.62 (31.71)          \\ 
			\textbf{Asterix}        & 437.59 (54.30) & 236.62 (27.33)  & 524.51 (24.21)           & 408.11 (36.09) & \textbf{560.33 (15.03)} \\ 
			\textbf{Freeway}        & 11.91 (3.88)   & 18.13 (1.62)    & 14.92 (1.79)             & 18.81 (1.06)   & \textbf{25.04 (0.70)}   \\ 
			\bottomrule
		\end{tabular}%
	}
	\vspace{0.1cm}
	\caption{Mean test scores (with standard deviation in parentheses) over ten epochs, averaged across five random seeds.}
	\label{tab:atari-comparison}
\end{table*}

\section*{Acknowledgement}
I would like to sincerely thank Dr. Neha Bhadala and the anonymous referees for carefully reading the manuscript and providing constructive feedback, which greatly enhanced its presentation and overall quality.

 \section*{Supplementary Material}
 
 This section provides details on the network architecture, training setup, and evaluation methodology for the deep reinforcement learning (DRL) experiments. Finally, the ablation studies for all numerical experiments are discussed.
 
 \subsection{Network Architecture, Training Setup, and Evaluation Methodology}
 
 Table I presents parameters for the maximization bias example, Table II presents the network architecture and training parameters for Atari, and Table III provides the parameters for the Rainbow, SORDQN, and DSORDQN algorithms. 
 
 \begin{table}[h]
 	\caption{Neural network and training parameters used for the maximization bias example. The parameters are similar to \cite{weng2020mean}.}
 	\label{tab:dql-parameters}
 	\centering
 	\begin{tabular}{lc}
 		\toprule
 		\textbf{Parameter} & \textbf{Value} \\
 		\midrule
 		Network architecture & Fully connected \\
 		Hidden layers & 2 \\
 		Layer sizes & 4, 8 \\
 		Activation & ReLU \\
 		Optimizer & SGD \\
 		Discount factor ($\gamma$) & 0.999 \\
 		Exploration & Epsilon-greedy \\
 		Epsilon ($\epsilon$) & 0.1 \\
 		States & $10^9 + 2$ \\
 		Actions & 2 (Left, Right) \\
 		Episodes & 400 \\
 		Iterations & 1000 \\
 		\bottomrule
 	\end{tabular}
 \end{table}

 \begin{table}[ht]
 	\caption{Network architecture and training parameters for Atari experiments.}
 	\label{tab:network-params}
 	\centering
 	\begin{tabular}{ll}
 		\toprule
 		\textbf{Component} & \textbf{Value} \\
 		\midrule
 		Env. & 6 Atari games via MushroomRL \\
 		Input & 84×84 grayscale (4-frame stack) \\
 		CNN & 3 conv layers + 2 FC layers \\
 		Conv layers & 32 (8×8/4), 64 (4×4/2), 64 (3×3/1) \\
 		FC layer & 512 units, action-value output \\
 		Loss & Huber (smooth L1) \\
 		Policy & Epsilon-greedy ($\epsilon$: 1.0 → 0.1) \\
 		Eval $\epsilon$ & 0.05 \\
 		Hist. length & 4 \\
 		Train freq. & 4 steps \\
 		Eval freq. & 250,000 steps \\
 		Target update & Every 10,000 steps \\
 		Replay init size & 50,000 \\
 		Max replay size & 500,000 \\
 		Test samples & 125,000 \\
 		Max train steps & 2,500,000 \\
 		Batch size & 32 \\
 		Discount factor ($\gamma$) & 0.99 \\
 		Optimizer & Adam ($\alpha$ = 0.00025) \\
 		Metrics & Mean reward (10 epochs), std over 5 seeds \\
 		\bottomrule
 	\end{tabular}
 \end{table}

 \begin{table}[ht]
 	\caption{Parameters for Rainbow \cite{hessel2018rainbow}, SORDQN \cite{9206598}, and DSORDQN on Atari.}
 	\label{tab:rainbow-parameters}
 	\centering
 	\begin{tabular}{lc}
 		\toprule
 		\textbf{Parameter} & \textbf{Value} \\
 		\midrule
 		Atoms ($n_{\text{atoms}}$) & 51 \\
 		Value range ($v_{\min}$, $v_{\max}$) & (-10, 10) \\
 		Multi-step return ($n$) & 3 \\
 		Prioritization exponent ($\alpha$) & 0.6 \\
 		Priority correction ($\beta$) & Linear (0.4 to 1) \\
 		Noisy net parameter ($\sigma$) & 0.5 \\
 		SOR param ($w$) & 1.3 \\
 		\bottomrule
 	\end{tabular}
 \end{table}
 
\bibliographystyle{IEEEtran}
\bibliography{ref.bib}

\end{document}